\newtheorem{theorem}{Theorem}[section]
\newtheorem{lemma}{Lemma}[section]
\newtheorem{corollary}{Corollary}[section]
\newtheorem{remark}{Remark}[section]
\newcommand{\secref}[1]{Section~\ref{#1}}
\newcommand{\figref}[1]{Figure~\ref{#1}}
\renewcommand{\eqref}[1]{Eq.~(\ref{#1})}
\newcommand{\lemref}[1]{Lemma~\ref{#1}}
\newcommand{\corollaryref}[1]{Corollary~\ref{#1}}
\newcommand{\thmref}[1]{Theorem~\ref{#1}}
\newcommand{\appref}[1]{Appendix~\ref{#1}}
\newtheorem{open problem}[theorem]{Open Problem}
\newcommand{\stam}[1]{}
\newcommand{\bx}{\mathbf{x}}
\newcommand{\bw}{\mathbf{w}}
\newcommand{\cs}{{\cal S}}
\DeclareMathOperator*{\sign}{sign}
\newcommand{\Ncal}{\mathcal{N}}
\newcommand{\reals}{{\mathbb R}}
\newcommand{\zero}{{\mathbf{0}}}
\DeclareMathOperator{\polylog}{polylog}
\newcommand{\inner}[1]{\langle #1 \rangle}
\newcommand{\norm}[1]{\left\|#1\right\|}
\newcommand{\printfnsymbol}[1]{%
  \textsuperscript{\@fnsymbol{#1}}%
}
\title{Adversarial Examples Exist in Two-Layer ReLU Networks for \\ Low Dimensional Linear Subspaces}
\author{
    Odelia Melamed \thanks{Weizmann Institute of Science, Israel, \texttt{odelia.melamed@weizmann.ac.il}}
    \and 
    Gilad Yehudai\thanks{Weizmann Institute of Science, Israel, \texttt{gilad.yehudai@weizmann.ac.il}}
    \and 
	Gal Vardi \thanks{TTI-Chicago and the Hebrew University of Jerusalem, \texttt{galvardi@ttic.edu}}
}
\date{}
\begin{document}


\maketitle

\begin{abstract}%
Despite a great deal of research, it is still not well-understood why trained neural networks are highly vulnerable to adversarial examples.
In this work we focus on two-layer neural networks trained using data which lie on a low dimensional linear subspace.
We show that standard gradient methods lead to non-robust neural networks, namely, networks which have large gradients in directions orthogonal to the data subspace, and are susceptible to small adversarial $L_2$-perturbations in these directions. 
Moreover, we show that decreasing the initialization scale of the training algorithm, or adding $L_2$ regularization, can make the trained network more robust to adversarial perturbations orthogonal to the data.
\end{abstract}

\section{Introduction}

Neural networks are observed to be susceptible to adversarial perturbations \citep{szegedy2013intriguing}, often imperceptible by humans. 
Many works have shown attacks, where adding a very small perturbation to the input may change the prediction of the network \citep{carlini2017adversarial, papernot2017practical, athalye2018obfuscated}. Other works have shown defense mechanisms
called 
\emph{adversarial training} \citep{papernot2016distillation, madry2017towards, wong2018provable}. Despite a great deal of 
research,
it is still not well-understood why neural-network training methods tend towards such \emph{non-robust} solutions.

Several recent works have given theoretical explanation for the existence of adversarial perturbations under different settings. One line of work \citep{daniely2020most, bubeck2021single, bartlett2021adversarial,montanari2022adversarial} have shown that random networks are susceptible to adversarial perturbations. These results might explain why neural networks are non-robust at initialization, but they do not explain why trained neural networks are non-robust.
Recently, \cite{vardi2022gradient} showed that for data which is \emph{nearly orthogonal}, after training for infinitely many iterations, the implicit bias of neural networks towards margin maximization leads to non-robust solutions. Despite these works, it is still unclear why trained neural networks tend to be non-robust to adversarial perturbations, and specifically what are the assumptions on the input data which leads to non-robustness.

One 
common belief about ``real-life'' datasets, is that they approximately lie on a low dimensional ``data-manifold'' in a high dimensional space. In this setting, 
the existence of perturbations orthogonal to the data-manifold that change the network's predictions is especially undesired, since such perturbations do not make the input closer to data points from other classes. Indeed, such a perturbation only increases the distance between the input and all ``real-life'' examples.
\cite{shamir2021dimpled} have demonstrated empirically that under such a data-manifold assumption, the decision boundary of a trained classifier clings to the data-manifold in a way that even very small perturbations orthogonal to the manifold can change the prediction. 

In this paper we focus on data which lies on a low dimensional ``data-manifold''. 
Specifically, we assume that the data lies on 
a linear subspace $P\subseteq \reals^d$ of dimension $d-\ell$ for some $\ell>0$.
We study adversarial perturbations in the direction of $P^\perp$, i.e. orthogonal to the data subspace. We show that the gradient projected on $P^\perp$ is large, and in addition there exist a \emph{universal adversarial perturbation} 
in a direction orthogonal to $P$. Namely, the same small adversarial perturbation applies to many inputs.
The norm of the gradient depends on the term $\frac{\ell}{d}$, while the perturbation size depends on the term $\frac{d}{\ell}$, i.e.
a low dimensional subspace implies reduced adversarial robustness.
Finally, we also study how changing the initialization scale or adding $L_2$ regularization affects robustness. We show that in our setting, decreasing the initialization scale, or adding a sufficiently large regularization term, can make the network significantly more robust. 
We also demonstrate empirically the effects of the initialization scale and regularization on the decision boundary. Our experiments suggest that these effects might extend to deeper networks.





\section{Related Works}

Despite extensive research, the reasons for the abundance of adversarial examples in trained neural networks are still not well understood \citep{goodfellow2014explaining,fawzi2018adversarial,shafahi2018adversarial,schmidt2018adversarially,khoury2018geometry,bubeck2019adversarial,allen2020feature,wang2020high,shah2020pitfalls,shamir2021dimpled,ge2021shift,wang2022adversarial,dohmatob2022non}. Below we discuss several prior works on this question.

In a string of works, it was shown that 
small adversarial perturbations can be found for any fixed input in certain ReLU networks with random weights (drawn from the Gaussian distribution).
Building on \cite{shamir2019simple}, it was shown in \cite{daniely2020most} that small adversarial $L_2$-perturbations can be found in random ReLU networks where each layer has vanishing width relative to the previous layer. \cite{bubeck2021single} extended this result to two-layer neural networks without the vanishing width assumption, and \cite{bartlett2021adversarial} extended it to a large family of ReLU networks of constant depth. Finally, \cite{montanari2022adversarial} provided a similar result, but with weaker assumptions on the network width and activation functions. 
These works aim to explain the abundance of adversarial examples in neural networks, since they imply that adversarial examples are common in random networks, and in particular in random initializations of gradient-based methods. However, trained networks are clearly not random, and properties that hold in random networks may not hold in trained networks.
Our results also involve an analysis of the random initialization, but we consider the projection of the weights onto the linear subspace orthogonal to the data, and study its implications on the perturbation size required for flipping the output's sign in trained networks.



In \cite{bubeck2021law} and \cite{bubeck2021universal}, the authors proved under certain assumptions, that overparameterization is necessary if one wants to interpolate training data using a neural network with a small Lipschitz constant. 
Namely, neural networks with a small number of parameters are not expressive enough to interpolate the training data while having a small Lipschitz constant.
These results suggest that overparameterization might be necessary for robustness. 

\cite{vardi2022gradient}
considered a setting where the training dataset $\cs$ consists of nearly-orthogonal points, and proved that every network to which gradient flow might converge is non-robust w.r.t. $\cs$. Namely, 
building on known properties of the implicit bias of gradient flow when training two-layer ReLU networks w.r.t. the logistic loss, they proved that
for every two-layer ReLU network to which gradient flow might converge as the time $t$ tends to infinity,
and every point $\bx_i$ from $\cs$, it is possible to flip the output's sign with a small perturbation. We note that in \cite{vardi2022gradient} there is a strict limit on the number of training samples and their correlations, as well as the training duration. Here, we have no assumptions on the number of data points and their structure, besides lying on a low-dimensional subspace. Also, in \cite{vardi2022gradient} the adversarial perturbations are shown to exist only for samples in the training set, while here we show existence of adversarial perturbation for any sample which lies on the low-dimensional manifold.

It is widely common to assume that ``real-life data'' (such as images, videos, text, etc.) lie roughly within some underlying low-dimensional data manifold. This common belief started many successful research fields such as GAN \citep{Goodfellow2014generative}, VAE \citep{kingma2013auto}, and diffusion \citep{sohl2015deep}. In \cite{fawzi2018adversarial} the authors consider a setting where the high dimensional input data is generated from a low-dimensional latent space. They theoretically analyze the existence of adversarial perturbations on the manifold generated from the latent space, although they do not bound the norm of these perturbations.
Previous works analyzed adversarial perturbations orthogonal to the data manifold. For example, \cite{khoury2018geometry} considering several geometrical properties of adversarial perturbation and adversarial training for low dimensional data manifolds.
\cite{tanay2016boundary} analyzed theoretically such perturbations for linear networks, and \cite{stutz2019disentangling} gave an empirical analysis for non-linear models.
Moreover, 
several
experimental defence methods against adversarial examples were obtained, using projection of it onto the data manifold to eliminate the component orthogonal to the data (see, e.g., \cite{jalal2017robust,meng2017magnet,samangouei2018defense}).

\cite{shamir2021dimpled} showed empirically on both synthetic and realistic datasets that the decision boundary of classifiers clings onto the data manifold, causing very close off-manifold adversarial examples.
Our paper continues this 
direction,
and provides theoretical guarantees for off-manifold perturbations on trained two-layer ReLU networks, in the special case where the manifold is a linear subspace.


\section{Setting}
\label{frameworksection}

\paragraph{Notations.} We denote by $\mathcal{U}(A)$ the uniform distribution over a set $A$. The multivariate normal distribution with mean $\mu$ and covariance $\Sigma$ is denoted by $\Ncal(\mu,\Sigma)$, and the univariate normal distribution with mean $a$ and variance $\sigma^2$ is denoted by $\Ncal(a,\sigma^2)$.
The set of integers $\{1,..,m\}$ is denoted by $[m]$.
For a vector $v \in \reals^n$, we define $v_{i:i+j}$ to be the $j+1$ coordinates of $v$ starting from $i$ and ending with $i+j$.
For a vector $x$ and a linear subspace $P$ we denote by $P^\perp$ the subspace orthogonal to $P$, and by $\Pi_{P^\perp}\left(x\right)$ the projection of $x$ on $P^\perp$. We denote by $\bm{0}$ the zero vector. We use $I_d$ for the identity matrix of size $d$.

\subsection{Architecture and Training}
In this paper we consider a two-layer fully-connected neural network $N:\mathbb{R}^d \to \mathbb{R}$ with ReLU activation, input dimension $d$ and hidden dimension $m$:
\[
N(x,\bw_{1:m})  = \sum\limits_{i=1}^{m} u_i \sigma(w_i^\top x)~.
\]
Here, $\sigma(z) = \max(z,0)$ is the ReLU function and $\bw_{1:m} = (w_1,\dots,w_m)$. When $\bw_{1:m}$ is clear from the context, we will write for short $N(x)$.

We initialize the first layer using standard Kaiming initialization \citep{he2015delving}, i.e. $w_i\sim \Ncal\left(\zero, \frac{1}{{d}}I_d\right)$, and the output layer as $u_i \sim \mathcal{U}\left(\left\{-\frac{1}{\sqrt{m}}, \frac{1}{\sqrt{m}}\right\}\right)$ for every $i\in[m]$. Note that in standard initialization, each $u_i$ would be initialized normally with a standard deviation of $\frac{1}{\sqrt{m}}$, for ease of analysis we fix the initial value to be equal to the standard deviation where only the sign is random.

We consider a dataset with binary labels.
Given a training dataset $(x_1,y_1),\dots,(x_r,y_r) \in \reals^d \times \{-1,1\}$ we train w.r.t. the logistic loss (a.k.a. binary cross entropy): $L(q) = \log(1+e^{-q})$, and minimize the empirical error:
\[
\min_{w_1,\dots,w_m} \sum_{i=1}^r L\left(y_i\cdot N(x_i,\bw_{1:m})\right)~.
\]
We assume throughout the paper that the network is trained using either gradient descent (GD) or stochastic gradient descent (SGD). 
Our results hold for both training methods.
We assume that only the weights of the first layer (i.e. the $w_i$'s) are trained, while the weights of the second layer (i.e. the $u_i$'s) are fixed. 

\subsection{Assumptions on the Data}
Our main assumption in this paper is that the input data lie on a low dimensional manifold, which is embedded in a high dimensional space. Specifically, we assume that this ``data manifold'' is a linear subspace, denoted by $P$, which has dimension $d-\ell$. We denote by $\ell$ the dimension of the data ``off-manifold'', i.e. the linear subspace orthogonal to the data subspace,
which is denoted by $P^\perp$. In this work we study adversarial perturbations in $P^\perp$. Note that adding a perturbation from $P^\perp$ of any size to an input data point which changes its label is an unwanted phenomenon, because this perturbation is orthogonal to any possible data point from both possible labels. We will later show that under certain assumptions there exists an adversarial perturbation in the direction of $P^\perp$ which also has a small norm.
This reason for this assumption is so that the projection of the first layer weights on $P^\perp$ remain fixed during training. An interesting question is to consider general ``data manifolds'', which we elaborate on in \secref{sec:conclusions}.

To demonstrate that the low-dimensional data assumption arises in practical settings, in \figref{fig:cifar mnist variance} we plot the cumulative variance of the MNIST and CIFAR10 datasets, projected on a linear manifold.  These are calculated by performing PCA on the entire datasets, and summing over the square of the singular values from largest to smallest. For CIFAR10, the accumulated variance reaches $90\%$ at $98$ components, and $95\%$ at $216$ components. For MNIST, the accumulated variance reaches $90\%$ at $86$ components, and $95\%$ at $153$ components. This indicates that both datasets can be projected to a much smaller linear subspace, without losing much of the information.

\begin{figure}[!ht]
\centering
\begin{subfigure}[b]{0.45\textwidth}
    \centering
     \includegraphics[width=\textwidth]{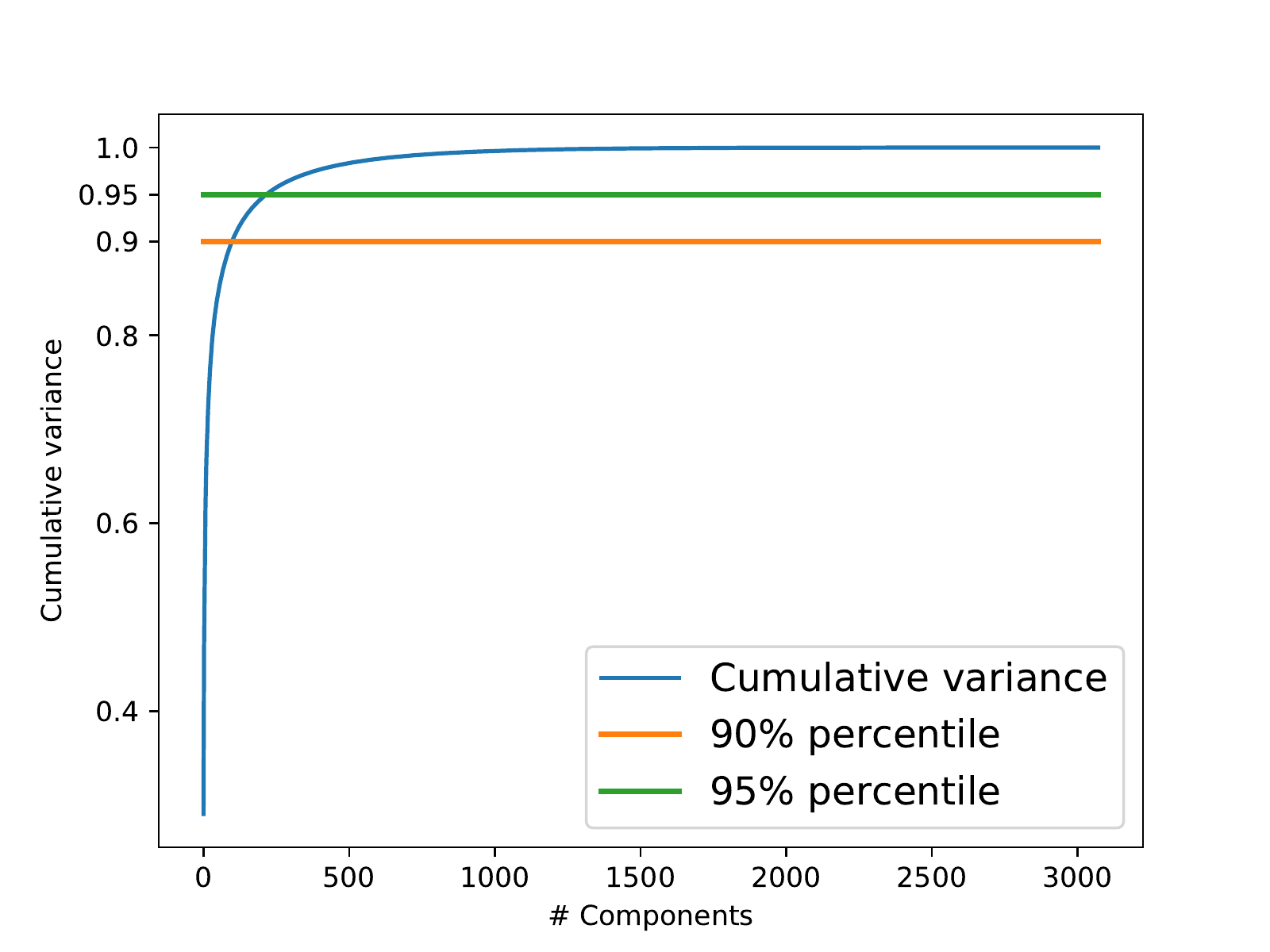}
     \caption{}
     \label{fig: cifar}
\end{subfigure}
\begin{subfigure}[b]{0.45\textwidth}
    \centering
     \includegraphics[width=\textwidth]{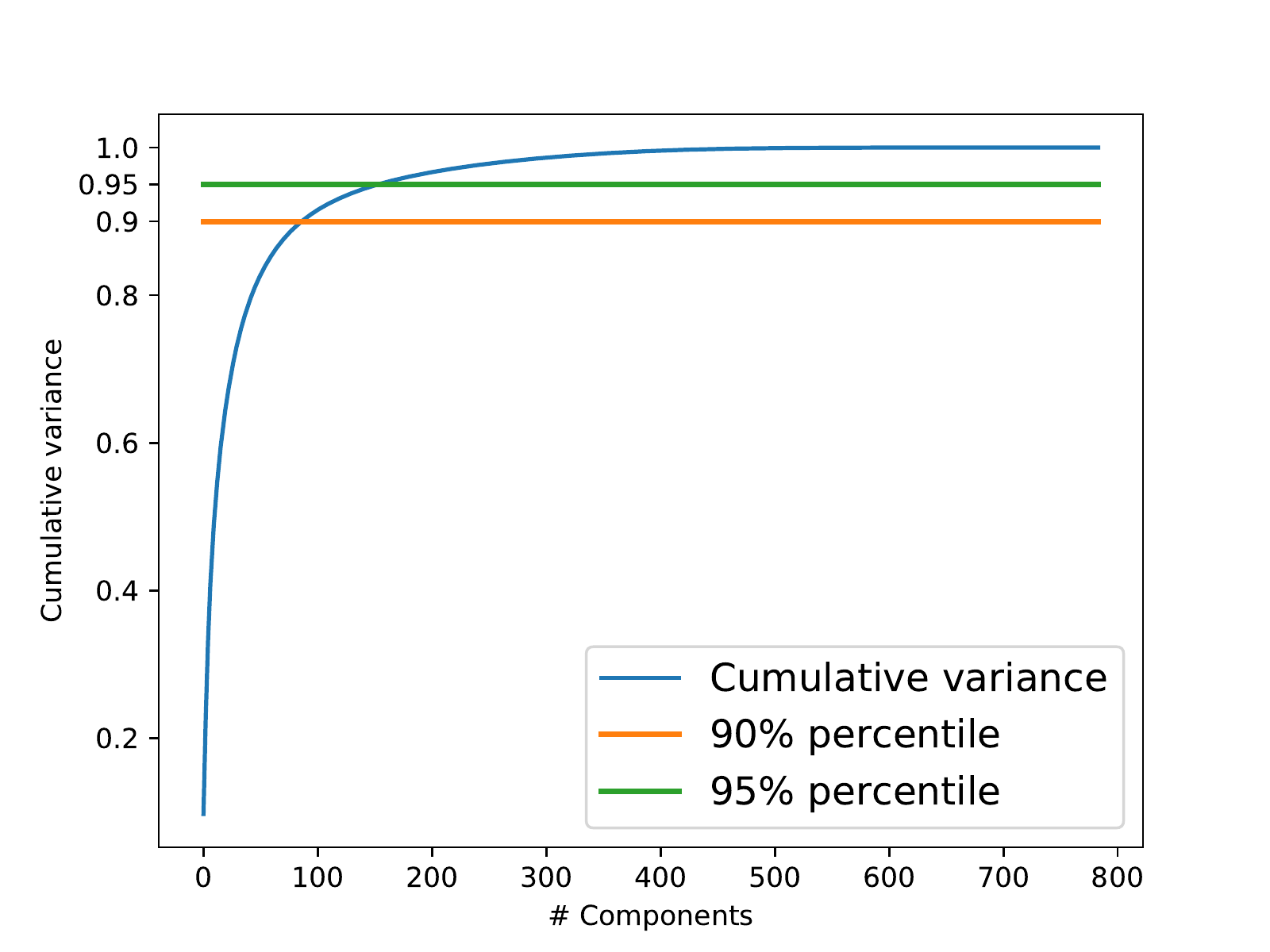}
     \caption{}
     \label{fig: mnist}
\end{subfigure}
\caption{The cumulative variance for the (a) CIFAR10; and (b) MNIST datasets, calculated by performing  PCA on the entire datasets, and summing over the square of the singular values from largest to smallest.}\label{fig:cifar mnist variance}
\end{figure}


\begin{remark}[On the Margin of the Network]\label{remark:margin}
    Given a neural network $N:\reals^d\rightarrow\reals$ and a dataset $(x_1,y_1),\dots,(x_r,y_r)$ with binary labels which the network label correctly, we define the margin of the network as $\gamma:=\min_{i\in [r]}y_iN(x_i)$. 
    
    In our setting, it is possible to roughly estimate the margin without assuming much about the data, besides its boundedness. Note that the gradient of the loss decays exponentially with the output of the network, because
    $\left|\frac{\partial L(q)}{\partial q}\right| = \left|\frac{-qe^{-q}}{1 + e^{-q}} \right|$.
    Hence, if we train for at most polynomially many iterations and label all the data points correctly (i.e. the margin is larger than $0$), then training effectively stops after the margin reaches $O(\log^2(d))$. This is because if the margin is $\log^2(d)$, then the gradient is of size:
    \[
         \left|L'(\log^2(d))\right| = \left|\frac{-\log^2(d)e^{-\log^2(d)}}{1 + e^{-\log^2(d)}}\right| \leq \log^2(d)\cdot d^{-\log(d)}~,
    \]
    which is smaller than any polynomials in $d$.
    This means that all the data points on the margin (which consists of at least one point) will have an output of $O(\polylog(d))$. 
    
    The number of points which lie exactly on the margin is difficult to assess, since it may depend on both the dataset and the model. Some empirical results in this direction are given in \cite{vardi2022gradient}, where it is observed (empirically) that for data sampled uniformly from a sphere and trained with a two-layer network, over $90\%$ of the input samples lie on the margin. Also, in \cite{haim2022reconstructing} it is shown that for CIFAR10, a large portion of the dataset lies on the margin.
\end{remark}

\section{Large Gradient Orthogonal to the Data Subspace}\label{sec:large gradient}


One proxy for showing non-robustness of models, is to show that their gradient w.r.t. the input data is large (cf. \cite{bubeck2021law,bubeck2021universal}). 
Although a large gradient does not guarantee that there is also an adversarial perturbation, it is an indication that a small change in the input might significantly change the output. Moreover, by assuming smoothness of the model, it is possible to show that having a large gradient may suffice for having an adversarial perturbation.

In this section we show that training a network on a dataset which lies entirely on a linear subspace yields a large gradient in a direction which is orthogonal to this subspace. Moreover, the size of the gradient depends on the dimension of this subspace. Specifically, the smaller the dimension of the data subspace,
the larger the gradient is in the orthogonal direction. Our main result in this section is the following:


\begin{theorem}
\label{largegradient}
       Suppose that a network $N(x) = \sum\limits_{i=1}^{m} u_i \sigma(\inner{w_i,x}) $ is trained on a dataset which lies on a linear subspace $P \subset \reals^d$ of dimension $d-\ell$ for $\ell \geq 1$, and let $x_0 \in P$. Let $S= \{i\in[m]: \inner{w_i,x_0} \geq 0\}$, and let $k := |S|$.
    Then, w.p $\geq 1 - e^{-\ell/16}$ (over the initialization) we have:
    \[\norm{\Pi_{P^\perp}\left(\frac{\partial N(x_0)}{\partial x}\right)} \geq \sqrt{\frac{k \ell}{2 m d}}~. \]

\end{theorem}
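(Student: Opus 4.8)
The plan is to reduce the statement to a standard Gaussian (chi-squared) concentration bound, after observing that the component of each first-layer weight orthogonal to $P$ is frozen throughout training. First I would write down the input gradient: with the convention $\sigma'(z) = \onefunc[z\ge 0]$ (consistent with the definition of $S$), we have $\frac{\partial N(x_0)}{\partial x} = \sum_{i \in S} u_i w_i$, so that $\Pi_{P^\perp}\!\left(\frac{\partial N(x_0)}{\partial x}\right) = \sum_{i\in S} u_i\, \Pi_{P^\perp}(w_i)$, where the $w_i$ are the \emph{trained} weights.

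The key structural step is the following. Since every training point $x_j$ and the point $x_0$ lie in $P$, the relevant inner products $\inner{w_i,x_j}$ and $\inner{w_i,x_0}$ depend on $w_i$ only through $\Pi_P(w_i)$. Hence the empirical loss, and every (S)GD update, are functions of $\{\Pi_P(w_i)\}_i$ and $\{u_i\}_i$ alone; moreover each update to $w_i$ is a linear combination of the $x_j$'s and so lies in $P$. It follows that $\Pi_{P^\perp}(w_i)$ is unchanged throughout training, equal to its initial value $\Pi_{P^\perp}(w_i^{(0)})$, and, since the initial $\Pi_P(w_i^{(0)})$ and $\Pi_{P^\perp}(w_i^{(0)})$ are independent, the tuple $(\Pi_{P^\perp}(w_i))_i$ is independent of the trained $\{\Pi_P(w_i)\}_i$, of $\{u_i\}_i$, and in particular of the set $S$ (which is determined by the trained $\Pi_P(w_i)$ and $x_0$) and of $k=|S|$.

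Next I would identify the conditional law. At initialization $w_i \sim \Ncal(\zero,\frac1d I_d)$, so in an orthonormal basis of the $\ell$-dimensional subspace $P^\perp$ the vectors $\Pi_{P^\perp}(w_i)$ are i.i.d.\ $\Ncal(\zero,\frac1d I_\ell)$. Conditioning on $(S,\{u_i\})$ leaves this distribution unchanged, and since $u_i^2 = \frac1m$ we get $\sum_{i\in S} u_i \Pi_{P^\perp}(w_i) \sim \Ncal\!\left(\zero, \frac{k}{md} I_\ell\right)$, so $\norm{\Pi_{P^\perp}\!\left(\frac{\partial N(x_0)}{\partial x}\right)}^2$ is distributed as $\frac{k}{md}$ times a $\chi^2_\ell$ random variable. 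A standard lower-tail bound for chi-squared (Laurent--Massart) gives $\Pr[\chi^2_\ell \le \ell/2] \le e^{-\ell/16}$, so with probability at least $1-e^{-\ell/16}$ we have $\norm{\Pi_{P^\perp}(\partial N(x_0)/\partial x)}^2 \ge \frac{k\ell}{2md}$; this bound is uniform over the conditioning, hence holds unconditionally, and taking square roots yields the theorem.

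The main obstacle — really the only non-routine point — is the second step: rigorously justifying that $\Pi_{P^\perp}(w_i)$ is frozen and independent of $S$. This is precisely where the linear-subspace assumption is used, and one must check it for both GD and SGD (with any minibatch size), i.e.\ that every weight update lies in $\mathrm{span}\{x_1,\dots,x_r\}\subseteq P$ regardless of which samples enter a given step and of any algorithmic randomness. Once this is established, everything else is routine Gaussian concentration.
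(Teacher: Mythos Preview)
Your proposal is correct and follows essentially the same approach as the paper: freeze the $P^\perp$-component of each $w_i$ using the linear-subspace assumption, establish its independence from $S$ (and the $u_i$'s), and then apply the Laurent--Massart chi-squared lower-tail bound to $\sum_{i\in S} u_i\,\Pi_{P^\perp}(w_i)$. The only cosmetic difference is that the paper first rotates $P$ to the coordinate subspace $\mathrm{span}\{e_1,\dots,e_{d-\ell}\}$ before arguing, whereas you work directly with orthogonal projections; the content is identical.
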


The full proof can be found in \appref{appen:proofs from large gradient}. Here we provide a short proof intuition: First, we use a symmetry argument to show that it suffices to consider w.l.o.g. the subspace $M:=\text{span}\{e_1,\dots,e_{d-\ell}\}$, where $e_i$ are the standard unit vectors.
Next, we note that since the dataset lies on $M$, only the first $d-\ell$ coordinates of each weight vector $w_i$ are trained, while the other $\ell$ coordinates are fixed at their initial value. Finally, using standard concentration result on Gaussian random variables we can lower bound the norm of the gradient. Note that our result shows that there might be a large gradient orthogonal to the data subspace. This correspond to ``off-manifold'' adversarial examples, while the full gradient (i.e. without projecting on $P^\perp$) might be even larger.

The lower bound on the gradient depends on two terms: $\frac{k}{m}$ and $\frac{\ell}{d}$. The first term is the fraction of active neurons for the input $x_0$, i.e. the neurons whose inputs are non-negative. Note that inactive neurons do not increase the gradient, since they do not affect the output. The second term corresponds to the fraction of directions orthogonal to the data.
The larger the dimension of the orthogonal subspace, the more directions in which it is possible to perturb the input while still being orthogonal to the data. If both of these terms are constant, i.e. there is a constant fraction of active neurons, and ``off-manifold'' directions, we can give a more concrete bound on the gradient:

\begin{corollary}\label{cor:large gradient}
For $\ell = \Omega(d)$, $k = \Omega(m)$, in the setting of \thmref{largegradient}, with probability 
$\geq 1- e^{-\Omega(d)}$ we have:
\[ \norm{\Pi_{P^\perp}\left(\frac{\partial N(x_0)}{\partial x}\right)}  = \Omega(1)~. \]
\end{corollary}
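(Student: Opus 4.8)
The plan is to instantiate Theorem~\ref{largegradient} with the stated growth conditions $\ell = \Omega(d)$ and $k = \Omega(m)$ and verify that both the probability bound and the magnitude bound become the claimed $\Omega$-statements. First I would write $\ell \geq c_1 d$ and $k \geq c_2 m$ for some constants $c_1, c_2 \in (0,1]$ coming from the hypotheses. Plugging these into the conclusion of Theorem~\ref{largegradient}, the lower bound on the projected gradient norm becomes
\[
\sqrt{\frac{k\ell}{2md}} \;\geq\; \sqrt{\frac{(c_2 m)(c_1 d)}{2md}} \;=\; \sqrt{\frac{c_1 c_2}{2}}~,
\]
which is a positive constant independent of $d$ and $m$, hence $\Omega(1)$.

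For the probability, Theorem~\ref{largegradient} gives success probability at least $1 - e^{-\ell/16}$; since $\ell \geq c_1 d$, we have $e^{-\ell/16} \leq e^{-c_1 d/16} = e^{-\Omega(d)}$, so the event holds with probability at least $1 - e^{-\Omega(d)}$, as claimed. One small point I would be careful about: Theorem~\ref{largegradient} requires $\ell \geq 1$, which is automatically satisfied for $d$ large enough under $\ell = \Omega(d)$, so the corollary is an asymptotic statement in $d$ and there is nothing further to check.

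Honestly, there is no real obstacle here — the corollary is a direct specialization of the theorem, and the only ``work'' is bookkeeping the constants and observing that $\sqrt{k\ell/(2md)}$ is scale-invariant under $k \sim m$, $\ell \sim d$. If I wanted to be thorough I would also note that $k$ itself is a random variable (it depends on the initialization through $S = \{i : \inner{w_i, x_0} \geq 0\}$), so the hypothesis $k = \Omega(m)$ should be read either as a deterministic assumption that holds on the relevant event, or one should invoke a separate concentration argument (e.g. $k$ concentrates around $m/2$ by symmetry of the Gaussian initialization, since $\inner{w_i, x_0} \geq 0$ with probability $1/2$ independently across $i$, and a Chernoff bound gives $k \geq m/4$ with probability $1 - e^{-\Omega(m)}$). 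Intersecting that event with the event from Theorem~\ref{largegradient} via a union bound still yields failure probability $e^{-\Omega(d)} + e^{-\Omega(m)}$, which is $e^{-\Omega(d)}$ when $m = \Omega(d)$ (or can simply be stated as $e^{-\Omega(\min(d,m))}$). I would present the clean version assuming $k = \Omega(m)$ as given and relegate the concentration remark to a sentence.
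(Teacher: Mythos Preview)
Your proposal is correct and matches the paper's treatment: the corollary is stated immediately after \thmref{largegradient} with no separate proof, since it is a direct substitution of $\ell = \Omega(d)$ and $k = \Omega(m)$ into the bound $\sqrt{k\ell/(2md)}$ and the probability $1 - e^{-\ell/16}$, exactly as you wrote. One small caveat on your optional concentration remark: $k = |S|$ is defined via the \emph{trained} weights $w_i$, not the initialization, so the symmetry argument ``$\inner{w_i,x_0} \geq 0$ with probability $1/2$'' does not apply post-training; the paper (and your clean version) simply takes $k = \Omega(m)$ as a hypothesis, which is the right move here.
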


Consider the case where the norm of each data point is $\Theta(\sqrt{d}) = \Theta(\sqrt{\ell})$, i.e. every coordinate is of size $\Theta(1)$. By Remark \ref{remark:margin}, for a point $x_0$ on the margin, its output is of size $\polylog(d)$. 
Therefore, for the point $x_0$, gradient of size $\Omega(1)$ corresponds to an adversarial perturbation of size $\polylog(d)$, which is much smaller than $\norm{x_0}=\Theta(\sqrt{d})$. We note that this is a rough and informal estimation, since, as we already discussed, a large gradient at $x_0$ does not necessarily imply that an adversarial perturbation exists. 
In the next section, we will prove the existence of adversarial perturbations.

\section{Existence of an Adversarial Perturbation}\label{sec:adv pert exists}

In the previous section we have shown that at any point $x_0$ which lies on the linear subspace of the data $P$, there is a large gradient in the direction of $P^\perp$. In this section we show that not only the gradient is large, there also exists an adversarial perturbation in the direction of $P^\perp$ which changes the label of a data point from $P$ (under certain assumptions). 
The main theorem of this section is the following:





\begin{theorem}\label{thm: exists pert}
Suppose that a network $N(x) = \sum\limits_{i=1}^{m} u_i \sigma(\inner{w_i, x})$ is trained on a dataset which lies on a linear subspace $P\subseteq \reals^d$ of dimension $d-\ell$, where $\ell \geq 32(m-1)\log(m^2 d)$. Let $x_0\in P$, and denote $y_0:=\text{sign}(N(x_0))$. Let $I_- := \{i \in [m] : u_i < 0\}$ and $I_+ := \{i \in [m] : u_i > 0\}$, and denote $k_- := |\{i\in I_- : \inner{w_i,x_0} \geq 0\}|$, and $k_+ := |\{i\in I_+ : \inner{w_i,x_0} \geq 0\}|$. Let $k_{y_0} = k_-$ if $y_0 = 1$ and $k_{y_0}=k_+$ if $y_0 = -1$. For $w\in \reals^d$ denote $\hat{w}:= \Pi_{P^\perp}(w)$, and denote the perturbation $z := y_0 \cdot \alpha \left( \sum\limits_{i \in I_-} \hat{w}_i - \sum\limits_{i \in I_+} \hat{w}_i \right)$ where $\alpha = \frac{8\sqrt{m}dN(x_0)}{\ell k_{y_0}}$.
Then, w.p. $\geq 1 - 5(me^{-\ell/16} + d^{-1/2})$ we have that $\norm{z} \leq 8\sqrt{2}N(x_0)\cdot \frac{m}{k_{y_0}}\cdot \sqrt{\frac{d}{\ell}}$ and:
\[
\text{sign}(N(x_0 + z)) \neq \text{sign}(N(x_0))~.
\]

\end{theorem}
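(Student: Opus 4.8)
The plan is to reduce everything to the \emph{frozen} off-manifold components $\hat w_i := \Pi_{P^\perp}(w_i)$. Since every training point lies in $P$, the loss gradient with respect to each $w_i$ is a linear combination of the $x_j$'s and hence lies in $P$; therefore $\Pi_{P^\perp}(w_i)$ never changes under GD/SGD and equals its value at initialization, exactly as in the proof of \thmref{largegradient}. By rotational invariance of the Kaiming initialization we may assume $P^\perp=\mathrm{span}\{e_{d-\ell+1},\dots,e_d\}$, so that $\hat w_1,\dots,\hat w_m$ are i.i.d.\ $\Ncal(\zero,\tfrac1d I_\ell)$. We may also assume w.l.o.g.\ that $y_0=1$ (replace $N$ by $-N$, i.e.\ flip all $u_i$), so $N(x_0)>0$, $k_{y_0}=k_-$, and $z=\alpha\bigl(\sum_{i\in I_-}\hat w_i-\sum_{i\in I_+}\hat w_i\bigr)$ with $\alpha>0$. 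Write $a_i:=\inner{w_i,x_0}$ and, using $z\in P^\perp$, $b_i:=\inner{w_i,z}=\inner{\hat w_i,z}$; then $b_i=\alpha\bigl(\pm\norm{\hat w_i}^2+\inner{\hat w_i,v_i}\bigr)$, where $v_i:=\sum_{j\in I_-\setminus\{i\}}\hat w_j-\sum_{j\in I_+\setminus\{i\}}\hat w_j$ is independent of $\hat w_i$ and the sign is $+$ for $i\in I_-$, $-$ for $i\in I_+$.

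Second, I would establish the concentration events over the initialization. By the $\chi^2$ lower tail, $\norm{\hat w_i}^2\ge\tfrac{\ell}{2d}$ for all $i$ with probability $\ge 1-m e^{-\ell/16}$; by the $\chi^2$ upper tail, $\norm{v_i}^2\le\tfrac{13}{8}\cdot\tfrac{(m-1)\ell}{d}$ for all $i$ and $\norm{\sum_{i\in I_-}\hat w_i-\sum_{i\in I_+}\hat w_i}^2\le\tfrac{13}{8}\cdot\tfrac{m\ell}{d}$, with probability $\ge 1-O(me^{-\ell/16})$; and conditioning on $v_i$ and using the Gaussian tail with a union bound over $i$, $|\inner{\hat w_i,v_i}|\le\tfrac1d\sqrt{\tfrac{13}{8}(m-1)\ell\cdot 2\log(m^2 d)}\le\tfrac{\ell}{3d}$ for all $i$ with probability $\ge 1-d^{-1/2}$ — this is exactly where the hypothesis $\ell\ge 32(m-1)\log(m^2 d)$ enters, forcing the cross terms to be negligible relative to $\norm{\hat w_i}^2$. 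On the intersection of these events, whose failure probability is $\le 5(me^{-\ell/16}+d^{-1/2})$, we get for $i\in I_-$ that $b_i\ge\alpha(\tfrac{\ell}{2d}-\tfrac{\ell}{3d})=\tfrac{\alpha\ell}{6d}>0$ and for $i\in I_+$ that $b_i\le-\tfrac{\alpha\ell}{6d}<0$; in both cases $u_i b_i\le-\tfrac{\alpha\ell}{6d\sqrt m}$ since $u_i b_i<0$ and $|u_i|=1/\sqrt m$.

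Third, I would decompose $N(x_0+z)-N(x_0)=\sum_{i=1}^m u_i\bigl(\sigma(a_i+b_i)-\sigma(a_i)\bigr)=:\sum_i\Delta_i$ and argue neuron-by-neuron using monotonicity of $\sigma$: for $i\in I_-$ we have $u_i<0$ and $b_i>0$, so $\sigma(a_i+b_i)\ge\sigma(a_i)$ and $\Delta_i\le 0$, with $\Delta_i=u_i b_i$ when in addition $a_i\ge 0$; for $i\in I_+$ we have $u_i>0$ and $b_i<0$, so $\sigma(a_i+b_i)\le\sigma(a_i)$ and $\Delta_i\le 0$. Hence all $\Delta_i\le 0$, and the $k_{y_0}=k_-$ neurons $i\in I_-$ with $a_i\ge 0$ satisfy $\Delta_i=u_i b_i\le-\tfrac{\alpha\ell}{6d\sqrt m}$. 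Summing, $N(x_0+z)-N(x_0)\le-k_{y_0}\tfrac{\alpha\ell}{6d\sqrt m}=-\tfrac43 N(x_0)$ after substituting $\alpha=\tfrac{8\sqrt m\, d\,N(x_0)}{\ell k_{y_0}}$, so $N(x_0+z)\le-\tfrac13 N(x_0)<0$ and the sign flips. For the norm bound, $\norm{z}=\alpha\norm{\sum_{i\in I_-}\hat w_i-\sum_{i\in I_+}\hat w_i}\le\alpha\sqrt{\tfrac{13}{8}\cdot\tfrac{m\ell}{d}}\le\alpha\sqrt{\tfrac{2m\ell}{d}}$, and plugging in $\alpha$ gives $\norm{z}\le 8\sqrt2\,N(x_0)\tfrac{m}{k_{y_0}}\sqrt{d/\ell}$.

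The main obstacle is the near-orthogonality estimate of the second paragraph: one must show, uniformly over all $i$, that the diagonal term $\norm{\hat w_i}^2\approx\ell/d$ dominates the aggregated $m-1$ cross terms $\inner{\hat w_i,\hat w_j}$, and it is precisely the regime $\ell\gtrsim m\log(md)$ that makes this possible. Everything else is routine bookkeeping once this holds — in particular, one never needs to track which neurons switch activation status under the perturbation, because after the normalization $y_0=1$ every neuron's contribution moves non-positively (by monotonicity of ReLU), so only the $k_{y_0}$ always-active neurons of sign $-y_0$ need to be accounted for quantitatively, while the tuned constant $\alpha$ converts their combined pull into a multiple of $N(x_0)$ exceeding $N(x_0)$ itself.
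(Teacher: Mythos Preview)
Your proposal is correct and follows essentially the same route as the paper: reduce via rotation invariance and the ``frozen off-manifold weights'' observation, split $\inner{\hat w_i,z}$ into the diagonal term $\pm\alpha\norm{\hat w_i}^2$ and cross terms, control the diagonal by a $\chi^2$ lower tail and the cross terms by a Gaussian tail (the paper packages this as its \lemref{2randominnermul}), then use monotonicity of $\sigma$ to see that every neuron's contribution moves non-positively while the $k_{y_0}$ active ones of sign $-y_0$ each contribute at most $-\Theta(\alpha\ell/(d\sqrt m))$, and finally bound $\norm{z}$ by a $\chi^2$ upper tail on the full sum. The only differences are cosmetic constants (you use the sharper Laurent--Massart constant $13/8$ and obtain $b_i\ge \alpha\ell/(6d)$, whereas the paper uses $2$ and gets $C\ge \alpha\ell/(4d)$; both plug into the same $\alpha$ to flip the sign), and your probability bookkeeping is in fact slightly tighter than the stated $5(me^{-\ell/16}+d^{-1/2})$.
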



The full proof can be found in \appref{appen:proofs from adv perf exists}. Here we give a short proof intuition: As in the previous section, we show using a symmetry argument that w.l.o.g. we can assume that $P=\text{span}\{e_1,\dots,e_{d-\ell}\}$.

Now, given the perturbation ${z}$ from \thmref{thm: exists pert} we want to understand how adding it to the input changes the output. Suppose that $y_0 = 1$. We can write 
\[
    N(x_0 + {z}) = \sum_{i\in I_-} u_i\sigma(\inner{w_i,x_0} + \inner{w_i,{z}}) + \sum_{i\in I_+} u_i\sigma(\inner{w_i,x_0} + \inner{w_i,{z}})
\]
We can see that for all $i$:
\begin{align*}
\inner{w_i,{z}} &=\alpha \cdot \inner{w_i, \sum_{j\in I_-} \Pi_{P^\perp}(w_j) - \sum_{j\in I_+} \Pi_{P^\perp}(w_j)} \\
& = -\alpha \cdot \inner{w_i, \sum_{j=1}^m\text{sign}(u_j)\Pi_{P^\perp}(w_j)}.
\end{align*}
For $i\in I_-$ we can write: 
\begin{align*}
\inner{w_i,{z}} &=
\alpha\norm{\Pi_{P^\perp}(w_i)}^2 - \alpha\inner{\Pi_{P^\perp}(w_i), \sum_{j\neq i}\text{sign}(u_j)\Pi_{P^\perp}(w_j) }~,  
\end{align*}
and using a similar calculation, for $i\in I_+$ we can write: 
\begin{align*}
\inner{w_i,{z}} &=
-\alpha\norm{\Pi_{P^\perp}(w_i)}^2 - \alpha\inner{\Pi_{P^\perp}(w_i), \sum_{j\neq i}\text{sign}(u_j)\Pi_{P^\perp}(w_j) }~.  
\end{align*}
Using concentration inequalities of Gaussian random variables, and the fact that $\Pi_{P^\perp}(w_i)$ did not change from their initial values, we can show that:
\[
\left|\inner{\Pi_{P^\perp}(w_i), \sum_{j\neq i}\text{sign}(u_j)\Pi_{P^\perp}(w_j) }\right|\approx \frac{\sqrt{\ell m}}{d}~,
\]
while $\norm{\Pi_{P^\perp}(w_i)}^2 \approx \frac{\ell}{d}$. Thus, for a large enough value of $\ell$ we have that $\inner{w_i,{z}} \leq 0$ for $i\in I_+$ and $\inner{w_i,{z}} \approx \alpha\cdot \frac{\sqrt{\ell}}{d}\cdot(\sqrt{\ell} - \sqrt{m})$
for $i\in I_-$. 

From the above calculations we can see that adding the perturbation ${z}$ does not increase the output of the neurons with a positive second layer. On the other hand, adding $z$ can only increase the input of the neurons with negative second layer, and for those neurons which are also active it increases their output as well if we assume that $\ell > m$. This means, that if there are enough active neurons with a negative second layer (denoted by $k_-$ in the theorem), then the perturbation can significantly change the output. In the proof we rely only on the active negative neurons to change the label of the output (for the case of $y_0=1$, if $y_0=-1$ we rely on the active positive neurons). Note that the active positive neurons may become inactive, and the inactive negative neurons may become active. Without further assumptions it is not clear what is the size of the perturbation to make this change for every neuron. Thus, the only neurons that are guaranteed to help change the label are the active negative ones, which by our assumptions on $\ell$ are guaranteed to increase their output.


Note that our perturbation is \emph{not} in the direction of the gradient w.r.t. the input. The direction of the gradient would be the sum of all the active neurons, i.e. the sum (with appropriate signs) over all 
$i \in [m]$
such that $\inner{w_i,x_0} \geq 0$. Our analysis would not have worked with such a perturbation, because we could not guarantee that inactive neurons would stay inactive. 

The assumption that $\ell = \Omega(M)$ (up to log factors) is a technical limitation of our proof technique. We note that such an assumption is also used in other theoretical papers about adversarial perturbations (e.g. \cite{daniely2020most}). 

Note that the direction of the perturbation ${z}$ does not depend on the input data $x_0$, only its size depends on $x_0$. In fact, \thmref{thm: exists pert} shows that there is a single universal direction for an adversarial perturbation that can flip the label of any data point in $P$. The size of the perturbation depends on the dimension of the linear subspace of the data, the number of active neurons for $x_0$, the total number of neurons in the network and the size of the output. In the following corollary we give a specific bound on the size of the perturbations under assumptions on the different parameters of the problem:

\begin{corollary}\label{cor:adv pert specific size}
    In the setting of \thmref{thm: exists pert}, assume in addition that $\ell = \Theta(d)$ and $k_{y_0} = \Theta(m)$. Then, there exists a perturbation $z$ such that w.p. $\geq 1- 5\left(me^{-\ell/16} + d^{-1/2}\right)$ we have $\norm{z} = O(N(x_0))$ and:
    \[
    \text{sign}(N(x_0 + z)) \neq \text{sign}(N(x_0))~.
    \]
\end{corollary}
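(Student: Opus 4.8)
The plan is to obtain the corollary as a direct quantitative specialization of \thmref{thm: exists pert}. First I would invoke \thmref{thm: exists pert} with the same $x_0$ and the same perturbation it constructs, namely $z = y_0\cdot\alpha\left(\sum_{i\in I_-}\hat w_i - \sum_{i\in I_+}\hat w_i\right)$ with $\alpha = \frac{8\sqrt m\, d\, N(x_0)}{\ell k_{y_0}}$. That theorem already gives, with probability at least $1-5(me^{-\ell/16}+d^{-1/2})$, both the sign flip $\text{sign}(N(x_0+z))\neq\text{sign}(N(x_0))$ and the norm bound
\[
\norm{z}\;\le\;8\sqrt 2\,N(x_0)\cdot\frac{m}{k_{y_0}}\cdot\sqrt{\frac{d}{\ell}}~.
\]
So the failure probability in the corollary is inherited verbatim, and it remains only to simplify this upper bound under the extra hypotheses.

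The simplification is immediate: the bound is the product of the absolute constant $8\sqrt 2$, the quantity $N(x_0)$, and two ratios. Since $k_{y_0}=\Theta(m)$ we have $\tfrac{m}{k_{y_0}}=\Theta(1)$, and since $\ell=\Theta(d)$ we have $\sqrt{d/\ell}=\Theta(1)$; multiplying these gives $\norm{z}=O(N(x_0))$, which is exactly the claimed bound. (One may additionally remark that in this regime the term $me^{-\ell/16}$ in the probability is $m\,e^{-\Omega(d)}$, so the whole failure probability is $O(d^{-1/2})$ for $d$ large relative to $m$, though the corollary only asserts the bound as stated.)

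The only point that needs a word of care — and it is the closest thing to an obstacle, albeit a mild one — is that \thmref{thm: exists pert} requires the hypothesis $\ell\ge 32(m-1)\log(m^2d)$, so one must check this is consistent with $\ell=\Theta(d)$. It is: if $\ell\ge c\,d$ for the implied constant $c$, then $\ell\ge 32(m-1)\log(m^2d)$ holds once $d$ is large enough relative to $m$ (i.e. $d=\tilde\Omega(m)$, for instance with $m$ fixed and $d\to\infty$), so the asymptotic regime of the corollary is nonempty and the application of \thmref{thm: exists pert} is legitimate. Beyond this, there is no genuine difficulty: the corollary is a plug-in of the two $\Theta(\cdot)$ assumptions into the general bound, with all the real work already done in \thmref{thm: exists pert}.
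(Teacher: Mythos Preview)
The proposal is correct and follows essentially the same approach as the paper: both simply plug the assumptions $\ell=\Theta(d)$ and $k_{y_0}=\Theta(m)$ into the norm bound $\norm{z}\le 8\sqrt{2}\,N(x_0)\cdot\frac{m}{k_{y_0}}\cdot\sqrt{d/\ell}$ from \thmref{thm: exists pert} to conclude $\norm{z}=O(N(x_0))$. Your additional remarks (checking compatibility with the hypothesis $\ell\ge 32(m-1)\log(m^2 d)$ and noting the asymptotic size of the failure probability) go slightly beyond what the paper spells out, but they are accurate and do not change the argument.
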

The above corollary follows directly by noticing from \thmref{thm: exists pert} that: 
\[
\norm{z} \leq O\left(N(x_0)\cdot \frac{m}{k_{y_0}}\cdot \sqrt{\frac{d}{\ell}}\right) = O(N(x_0))~,
\]
where we plugged in the additional assumptions.
%
The assumptions in the corollary above are similar to the assumptions in \corollaryref{cor:large gradient}. Namely, that the dimension of the data subspace is a constant fraction from the dimension of the entire space, and the number of active neurons is a constant fraction of the total number of neurons. Note that here we only consider active neurons with a specific sign in the second layer.
Note that the size of the perturbation in \corollaryref{cor:adv pert specific size} is bounded by $N(x_0)$. By Remark~\ref{remark:margin}, the output of the network for data points on the margin can be at most $O(\log^2(d))$, since otherwise the network would have essentially stopped training. 
Therefore, if we consider an input $x_0$ on the margin, and $\norm{x_0}=\Theta(\sqrt{d})=\Theta(\sqrt{\ell})$, then the size of the adversarial perturbation is much smaller than $\norm{x_0}$.
For any other point, without assuming it is on the margin, and since we do not assume anything about the training data (except for being in $P$), we must assume that the size of the perturbation required to change the label will depend on the size of the output.

\section{The Effects of the Initialization Scale and Regularization on Robustness}\label{sec: improve robustness}
In \secref{sec:adv pert exists}, we presented the inherent vulnerability of trained models to small perturbations in a direction orthogonal to the data subspace. In this section, 
we return to a common proxy for robustness that we considered in \secref{sec:large gradient} -- the gradient at an input point $x_0$.
We suggest two ways that might improve the robustness of the model in the direction orthogonal to the data, by decreasing an upper bound of the gradient in this direction. 
We first upper bound the gradient of the model in the general case where we initialize $w_i \sim \mathcal{N}(\zero,\beta^2 I_d)$, and later discuss strategies to use this upper bound for improving robustness.

\begin{theorem}
\label{thm:upper bound small init}
    Suppose that a network $N(x) = \sum\limits_{i=1}^{m} u_i \sigma(\inner{w_i, x})$ is trained on a dataset which lies on a linear subspace $P\subseteq \reals^d$ of dimension $d-\ell$ for $\ell \geq 1$, and assume that the weights $w_i$ are initialized from $\mathcal{N}(\zero,\beta^2I_d)$. Let $x_0 \in P$, let $S= \{i\in[m]: \inner{w_i,x_0} \geq 0\}$, and let $k = |S|$.
    Then, w.p. $\geq 1 - e^{-\frac{\ell}{16}}$ we have:
    \[\norm{\Pi_{P^\perp}\left(\frac{\partial N(x_0)}{\partial x}\right)} \leq \beta \sqrt{\frac{2 k \ell}{ m}}~. \]
\end{theorem}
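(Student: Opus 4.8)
The plan is to express $\Pi_{P^\perp}\!\left(\frac{\partial N(x_0)}{\partial x}\right)$ as a Gaussian vector supported on the coordinates that are \emph{frozen} throughout training, and then invoke a $\chi^2$ tail bound. By the same symmetry reduction used in the proof of \thmref{largegradient} — apply an orthogonal map sending $P$ to $\mathrm{span}\{e_1,\dots,e_{d-\ell}\}$, under which both the isotropic Gaussian initialization and the gradient-based dynamics are invariant — we may assume $P=\mathrm{span}\{e_1,\dots,e_{d-\ell}\}$, so $P^\perp=\mathrm{span}\{e_{d-\ell+1},\dots,e_d\}$. Writing $w_i=(v_i,\hat w_i)$ with $v_i\in\reals^{d-\ell}$ and $\hat w_i\in\reals^\ell$, note that for every training point $x_j\in P$ we have $\inner{w_i,x_j}=\inner{v_i,(x_j)_{1:d-\ell}}$, so $N(x_j)$ and every ReLU activation pattern on the training set depend on the first layer only through the $v_i$'s; consequently the GD/SGD updates to $w_i$ lie in $P$, and $\hat w_i$ stays at its initial value $\hat w_i^{(0)}\sim\Ncal(\zero,\beta^2 I_\ell)$. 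These vectors are independent across $i$ and independent of $\{v_i^{(0)}\}$, $\{u_i\}$, and any minibatch-sampling randomness.

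Next, with the convention $\sigma'(t)=\onefunc[t\geq 0]$ (consistent with the definition of $S$), the input gradient is $\frac{\partial N(x_0)}{\partial x}=\sum_{i=1}^m u_i\,\onefunc[\inner{w_i,x_0}\geq 0]\,w_i=\sum_{i\in S}u_i w_i$, so $\Pi_{P^\perp}\!\left(\frac{\partial N(x_0)}{\partial x}\right)=\sum_{i\in S}u_i\hat w_i$. Condition on $\{v_i^{(0)}\}$, $\{u_i\}$ and (for SGD) the sampling randomness: this data determines the trained $v_i$'s, hence the set $S$ and $k=|S|$, while leaving $\{\hat w_i\}_{i\in S}$ i.i.d.\ $\Ncal(\zero,\beta^2 I_\ell)$. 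Since $u_i^2=1/m$, the conditional law of $v:=\sum_{i\in S}u_i\hat w_i$ is $\Ncal\!\left(\zero,\tfrac{k\beta^2}{m}I_\ell\right)$, so $\|v\|^2$ equals $\tfrac{k\beta^2}{m}$ times a $\chi^2_\ell$ random variable.

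Finally, by the Laurent--Massart bound $\Pr[\chi^2_\ell\geq \ell+2\sqrt{\ell t}+2t]\leq e^{-t}$ with $t=\ell/16$ we get $\ell+2\sqrt{\ell t}+2t=\tfrac{13}{8}\ell\leq 2\ell$, hence $\Pr[\chi^2_\ell\geq 2\ell]\leq e^{-\ell/16}$. Therefore, conditionally — and thus unconditionally, the bound being uniform over the conditioning — with probability at least $1-e^{-\ell/16}$ we have $\|v\|^2\leq \tfrac{2k\beta^2\ell}{m}$, i.e.\ $\norm{\Pi_{P^\perp}\!\left(\frac{\partial N(x_0)}{\partial x}\right)}\leq\beta\sqrt{2k\ell/m}$; the case $k=0$ is trivial. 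The one delicate point is the independence claim used when conditioning: that $S$ is a function only of the \emph{trainable} $P$-components of the weights (together with the initialization and sampling randomness), and is therefore independent of the frozen $P^\perp$-components $\hat w_i$. This is precisely what makes the conditional distribution of $v$ a centered Gaussian with the stated covariance, and it hinges on the fact that the first-layer gradient updates never leave $P$.
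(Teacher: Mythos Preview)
Your proof is correct and follows essentially the same approach as the paper: reduce by rotation to $P=\mathrm{span}\{e_1,\dots,e_{d-\ell}\}$, observe that the $P^\perp$-components $\hat w_i$ remain frozen at initialization and that $S$ depends only on the $P$-components, then apply a $\chi^2$ upper-tail bound to the resulting Gaussian sum. Your treatment is in fact a bit more explicit about the conditioning/independence argument and the Laurent--Massart computation than the paper's own proof.
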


The full proof uses the same concentration bounds ideas as the lower bound proof and can be found in \appref{apen: proofs for improve robustness}. This bound is a result of the untrained weights: since the projection of the data points on $P^\perp$ is zero, the projection of the weights vectors on $P^\perp$ are not trained and are fixed at their initialization. 
We note that \thmref{largegradient} readily extends to the case of initialization from  $\mathcal{N}(\zero,\beta^2I_d)$, in which case the lower bound it provides matches the upper bound from \thmref{thm:upper bound small init} up to a constant factor.
In what follows, we suggest two ways to affect the post-training weights in the $P^\perp$ direction: (1) To initialize the weights vector using a smaller-variance initialization, and (2) Add an $L_2$-norm regularization on the weights. We next analyze their effect on the upper bound. 

\subsection{Small Initialization Variance}
From \thmref{thm:upper bound small init}, one can conclude a strong result about the model's gradient without the dependency of its norm on $\ell$ and $k$.

\begin{corollary}
For $\beta = \frac{1}{d\sqrt{2}}$, in the settings of \thmref{thm:upper bound small init}, with probability
$\geq 1 - e^{-\frac{\ell}{16}}$ we have

\[\norm{\Pi_{P^\perp}\left(\frac{\partial N(x_0)}{\partial x}\right)} \leq \frac{1}{\sqrt{d}}~.
\]
\end{corollary}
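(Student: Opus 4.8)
The plan is to derive the bound by directly substituting the chosen value of $\beta$ into the upper bound established in \thmref{thm:upper bound small init} and then simplifying using two trivial facts: $k \le m$ (since $S \subseteq [m]$) and $\ell \le d$ (since $P$ has dimension $d-\ell \ge 0$, so the orthogonal complement has dimension at most $d$).

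Concretely, first I would invoke \thmref{thm:upper bound small init} with the initialization variance $\beta^2$, which holds with probability at least $1 - e^{-\ell/16}$ and gives
\[
\norm{\Pi_{P^\perp}\left(\frac{\partial N(x_0)}{\partial x}\right)} \leq \beta \sqrt{\frac{2 k \ell}{m}}~.
\]
Then I would plug in $\beta = \frac{1}{d\sqrt{2}}$, so that the right-hand side becomes
\[
\frac{1}{d\sqrt{2}}\cdot\sqrt{\frac{2 k \ell}{m}} = \frac{1}{d}\sqrt{\frac{k\ell}{m}}~.
\]
Using $k \le m$ we get $\frac{k\ell}{m} \le \ell$, and using $\ell \le d$ we get $\frac{k\ell}{m}\le d$, hence $\frac{1}{d}\sqrt{\frac{k\ell}{m}} \le \frac{1}{d}\sqrt{d} = \frac{1}{\sqrt{d}}$, which is exactly the claimed bound, on the same event of probability at least $1 - e^{-\ell/16}$.

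There is essentially no obstacle here: the corollary is a one-line consequence of \thmref{thm:upper bound small init}, and the only thing to be careful about is that the probability statement carries over verbatim (no union bound or additional randomness is introduced) and that the two elementary inequalities $k \le m$ and $\ell \le d$ are available from the setup. The content of the statement is conceptual rather than technical: it highlights that shrinking the initialization scale to $\Theta(1/d)$ suppresses the $\ell$- and $k$-dependence of the orthogonal gradient entirely, leaving a bound that vanishes with the ambient dimension.
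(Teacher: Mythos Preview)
Your proposal is correct and follows essentially the same argument as the paper, which simply states that the corollary follows directly from \thmref{thm:upper bound small init} by noticing that $k \leq m$ and $\ell \leq d$.
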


The proof follows directly from \thmref{thm:upper bound small init}, by noticing that $k\leq m$ and $\ell \leq d$.
Consider for example an input $x_0 \in P$ with $\norm{x_0}=\Theta(\sqrt{d})$, and suppose that $N(x_0) = \Omega(1)$. The above corollary implies that if the initialization has a variance of $1/d^2$ (rather than the standard choice of $1/d$) then the gradient is of size $O\left(1/\sqrt{d}\right)$. Thus, it corresponds to perturbations of size $\Omega(\sqrt{d})$, which is the same order as $\norm{x_0}$.

\subsection{$L_2$ Regularization}
We consider another way to influence the projection onto $P^\perp$ of the trained weights vectors: adding $L_2$ regularization while training. We will update the logistic loss function by adding an additive factor $\frac{1}{2}\lambda\norm{\bw_{1:m}}^2$. 
For a dataset $(x_1,y_1),..,(x_r,y_r)$, we now train over the following objective:

 \[ \sum_{j=1}^r L(y_j \cdot N(x_j,\bw_{1:m}))) + \frac{1}{2}\lambda \norm{\bw_{1:m}}^2~.\]
This regularization will cause the previously untrained weights to decrease in each training step which will decrease the upper bound on the projection of the gradient:

\begin{theorem}
\label{thm: explicit regularization gradient}
    Suppose that a network $N(x) = \sum\limits_{i=1}^{m} u_i \sigma(\inner{w_i, x})$ is trained for $T$ training steps, using $L_2$ regularization with parameter $\lambda \geq 0$ and step size $\eta > 0$, on a dataset which lies on a linear subspace $P\subseteq \reals^d$ of dimension $d-\ell$ for $\ell \geq 1$, starting from standard initialization (i.e., $w_i \sim \mathcal{N}(\zero,\frac{1}{d} I_d)$). Let $x_0 \in P$, let $S= \{i\in[m]: \inner{w_i,x_0} \geq 0\}$, and let $k := |S|$.
    Then, w.p. $\geq 1 - e^{-\frac{\ell}{16}}$ we have
    \[\norm{\Pi_{P^\perp}\left(\frac{\partial N(x_0)}{\partial x}\right)} \leq (1- \eta \lambda)^T \sqrt{\frac{2 k \ell}{ m d}}~. \]
\end{theorem}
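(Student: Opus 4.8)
Write $w_i^{(t)}$ for the $i$-th first-layer weight after $t$ steps, so $w_i^{(0)}$ is its initialization. The plan is to show that the $L_2$ term makes the $P^\perp$-component of every weight contract by a factor $(1-\eta\lambda)$ at each step, hence equal $(1-\eta\lambda)^T$ times its initial value after $T$ steps, and then apply the same Gaussian concentration argument used in the proofs of Theorem~\ref{largegradient} and Theorem~\ref{thm:upper bound small init}. As in those proofs, rotational invariance of the initialization lets us assume without loss of generality that $P=\mathrm{span}\{e_1,\dots,e_{d-\ell}\}$, so $\Pi_{P^\perp}(w)$ is just the restriction of $w$ to its last $\ell$ coordinates.

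First I would establish the dynamical identity: for every $i\in[m]$ and every step $t$,
\[
\Pi_{P^\perp}\bigl(w_i^{(t+1)}\bigr) = (1-\eta\lambda)\,\Pi_{P^\perp}\bigl(w_i^{(t)}\bigr),
\qquad\text{hence}\qquad
\Pi_{P^\perp}\bigl(w_i^{(T)}\bigr) = (1-\eta\lambda)^T\,\Pi_{P^\perp}\bigl(w_i^{(0)}\bigr).
\]
Indeed, the gradient of the data-fit term $\sum_j L(y_j N(x_j))$ with respect to $w_i$ is $\sum_j L'(y_j N(x_j))\,y_j u_i\,\sigma'(\inner{w_i,x_j})\,x_j$, a linear combination of the $x_j$; since each $x_j\in P$, this gradient lies in $P$ and vanishes under $\Pi_{P^\perp}$. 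The regularizer contributes $\lambda w_i$, whose projection is $\lambda\,\Pi_{P^\perp}(w_i)$, so projecting the update $w_i^{(t+1)}=w_i^{(t)}-\eta\bigl(\nabla_{\mathrm{data}}+\lambda w_i^{(t)}\bigr)$ onto $P^\perp$ yields exactly the claimed contraction. The identical reasoning applies verbatim to SGD, since a mini-batch gradient of the data term is still a combination of points in $P$. (We are in the natural regime $\eta\lambda\in[0,1]$, so $(1-\eta\lambda)^T\ge 0$.)

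Next I would use the input-gradient formula $\partial N(x_0)/\partial x = \sum_{i\in S}u_i w_i^{(T)}$ to write
\[
\Pi_{P^\perp}\!\left(\frac{\partial N(x_0)}{\partial x}\right) = \sum_{i\in S} u_i\,\Pi_{P^\perp}\bigl(w_i^{(T)}\bigr) = (1-\eta\lambda)^T\sum_{i\in S} u_i\,\Pi_{P^\perp}\bigl(w_i^{(0)}\bigr).
\]
The crucial point is that the active set $S$ and the fixed signs $u_i$ carry no information about $\{\Pi_{P^\perp}(w_i^{(0)})\}_{i\in[m]}$: membership $i\in S$ depends only on $\inner{\Pi_P(w_i^{(T)}),x_0}$ because $x_0\in P$, and by the identity above the entire trajectory $\{\Pi_P(w_i^{(t)})\}_t$ is a function of $\{\Pi_P(w_i^{(0)})\}$, the dataset, the $u_i$'s, and (for SGD) the sampling randomness alone. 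Since orthogonal coordinates of a spherical Gaussian are independent, $\{\Pi_{P^\perp}(w_i^{(0)})\}_{i\in[m]}$ are i.i.d.\ $\mathcal{N}(0,\tfrac1d I_\ell)$ and independent of all of that. Hence, conditionally on $S$ and the $u_i$'s, and using $u_i^2=\tfrac1m$, the vector $\sum_{i\in S}u_i\,\Pi_{P^\perp}(w_i^{(0)})$ has law $\mathcal{N}\bigl(0,\tfrac{k}{md}I_\ell\bigr)$, so its squared norm equals $\tfrac{k}{md}$ times a $\chi^2_\ell$ random variable.

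Finally, the Laurent--Massart bound $\Pr[\chi^2_\ell \ge \ell + 2\sqrt{\ell t}+2t]\le e^{-t}$ with $t=\ell/16$ gives threshold $\tfrac{13}{8}\ell<2\ell$, hence $\Pr[\chi^2_\ell \ge 2\ell]\le e^{-\ell/16}$; since this conditional tail bound does not depend on $S$ or the $u_i$'s it holds unconditionally as well. Therefore, with probability at least $1-e^{-\ell/16}$, $\norm{\sum_{i\in S}u_i\,\Pi_{P^\perp}(w_i^{(0)})}\le\sqrt{2k\ell/(md)}$, and multiplying by $(1-\eta\lambda)^T$ finishes the proof. I expect the only genuinely delicate step to be the independence argument — justifying that after $T$ steps of regularized (S)GD the set of neurons active at $x_0$ reveals nothing about the $P^\perp$-part of the initialization — while the contraction identity and the $\chi^2$ tail bound are routine, the latter being essentially the computation already carried out for Theorems~\ref{largegradient} and \ref{thm:upper bound small init}.
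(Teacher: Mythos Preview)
Your proposal is correct and follows essentially the same approach as the paper: establish the contraction $\Pi_{P^\perp}(w_i^{(T)})=(1-\eta\lambda)^T\Pi_{P^\perp}(w_i^{(0)})$ by observing that the data-fit gradient lies in $P$, then reduce to the small-initialization upper bound (Theorem~\ref{thm:upper bound small init}) with $\beta=(1-\eta\lambda)^T/\sqrt{d}$ and apply the $\chi^2$ tail bound. Your treatment of the independence of $S$ from the $P^\perp$-components is in fact more carefully spelled out than the paper's, but the underlying argument is the same.
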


The full proof can be found in \appref{apen: explicit regularization}. The main idea of the proof is to observe the projection of the weights on $P^\perp$ changing during training. As before, we 
assume w.l.o.g. that $P = \text{span}\{e_1,\dots,e_{d-\ell}\}$ and denote by $\hat{w}_i:= \Pi_{P^\perp}(w_i)$. 
During training, the weight vector's last $\ell$ coordinates are only affected by the regularization term of the loss. These weights decrease in a constant multiplicand of the previous weights. Thus, we can conclude that for every $t \geq 0$ we have:
$\hat{w}^{(t)}_i = (1- \eta \lambda)^t \hat{w}^{(0)}_i$,  
where $\hat{w}^{(t)}_i$ is the $i$-th weight vector at time $t$.
It implies that our setting is equivalent to initializing the weights with standard deviation $\frac{(1- \eta \lambda)^T}{\sqrt{d}}$ and training the model without regularization for $T$ steps. As a result, we get the following corollary:

\begin{corollary}
For $(1- \eta \lambda)^T \leq \frac{1}{\sqrt{2d}}$, in the settings of \ref{thm: explicit regularization gradient}, w.p. $\geq 1 - e^{-\frac{\ell}{16}}$ we get that:

\[\norm{\Pi_{P^\perp}\left(\frac{\partial N(x_0)}{\partial x}\right)} \leq \frac{1}{\sqrt{d}}~. \]
\end{corollary}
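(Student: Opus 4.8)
The plan is to obtain the corollary as an immediate consequence of \thmref{thm: explicit regularization gradient} by replacing the instance-dependent quantities $k$ and $\ell$ with their trivial maxima. First I would invoke \thmref{thm: explicit regularization gradient}: under the stated regularized-training setup (standard initialization, $T$ steps, step size $\eta$, regularization parameter $\lambda$, data lying on $P$ of dimension $d-\ell$), with probability at least $1 - e^{-\ell/16}$ over the initialization we have
\[
\norm{\Pi_{P^\perp}\left(\frac{\partial N(x_0)}{\partial x}\right)} \leq (1-\eta\lambda)^T \sqrt{\frac{2k\ell}{md}}~.
\]
Now observe that $S = \{i\in[m] : \inner{w_i,x_0}\geq 0\}\subseteq[m]$, hence $k = |S| \leq m$, and that $P^\perp\subseteq\reals^d$ has dimension $\ell$, hence $\ell \leq d$. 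Substituting these two bounds into the right-hand side yields $\sqrt{\tfrac{2k\ell}{md}} \leq \sqrt{\tfrac{2md}{md}} = \sqrt{2}$.

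Next I would plug in the corollary's hypothesis $(1-\eta\lambda)^T \leq \tfrac{1}{\sqrt{2d}}$, which gives, on the same event,
\[
\norm{\Pi_{P^\perp}\left(\frac{\partial N(x_0)}{\partial x}\right)} \leq \frac{1}{\sqrt{2d}}\cdot\sqrt{2} = \frac{1}{\sqrt{d}}~.
\]
Since no additional randomness is introduced and no further union bound is required, the failure probability $e^{-\ell/16}$ is inherited verbatim from \thmref{thm: explicit regularization gradient}, establishing the claim.

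There is essentially no obstacle here: all the substance is contained in \thmref{thm: explicit regularization gradient}, and the corollary is a one-line substitution. The only point deserving a moment's care is that the bookkeeping with $(1-\eta\lambda)^T$ should be read under the implicit assumption $0 \le \eta\lambda \le 1$ inherent to the gradient-descent-with-$L_2$-regularization setting of the theorem, so that $(1-\eta\lambda)^T$ is a nonnegative quantity; this is consistent with (and in fact implied by) the hypothesis $(1-\eta\lambda)^T \le \tfrac{1}{\sqrt{2d}} < 1$. Nothing further needs to be checked.
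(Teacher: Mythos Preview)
Your proposal is correct and matches the paper's own approach: the paper treats this corollary as an immediate consequence of \thmref{thm: explicit regularization gradient} together with the trivial bounds $k\le m$ and $\ell\le d$, exactly as you do. The only quibble is your parenthetical claim that the hypothesis $(1-\eta\lambda)^T\le\tfrac{1}{\sqrt{2d}}$ \emph{implies} $0\le\eta\lambda\le 1$; it does not (e.g.\ for even $T$), but this is irrelevant to the proof since that condition is part of the ambient training setup rather than something to be deduced.
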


\subsection{Experiments}\label{sec: experiments}

In this section, we present our robustness-improving experiments.
\footnote{For the code of the experiments see \url{https://github.com/odeliamel/off-manifold-robustness}}
We explore our 
methods on two datasets: (1)  A 7-point dataset on a one-dimensional linear subspace in a two-dimensional input space, and (2) A 25-point dataset on a two-dimensional linear subspace in a three-dimensional input space. In Figures \ref{fig:1d data} and \ref{fig:2d data} we present the boundary of a two-layer ReLU network trained over these two datasets. We train the networks until reaching a constant positive margin. 
We note that unlike our theoretical analysis, in the experiments in Figure \ref{fig:1d data} we trained all layers and initialize the weights using the default PyTorch initialization, to verify that the observed phenomena occur also in this setting. In the experiment in \figref{fig:2d data} we use a different initialization scale for the improving effect to be smaller and visualized easily. 
In Figures \ref{fig: 1d clean} and \ref{fig: 2d clean} we trained with default settings. In Figures \ref{fig: 1d small init} and \ref{fig: 2d small init} we initialized the weights using an initialization with a smaller variance (i.e., initialization divided by a constant factor). Finally, in Figures \ref{fig: 1d regular} and \ref{fig: 2d regular} we train with $L_2$ regularization.

\begin{figure}[!ht]
\centering
\begin{subfigure}[b]{0.3\textwidth}
    \centering
     \includegraphics[width=\textwidth]{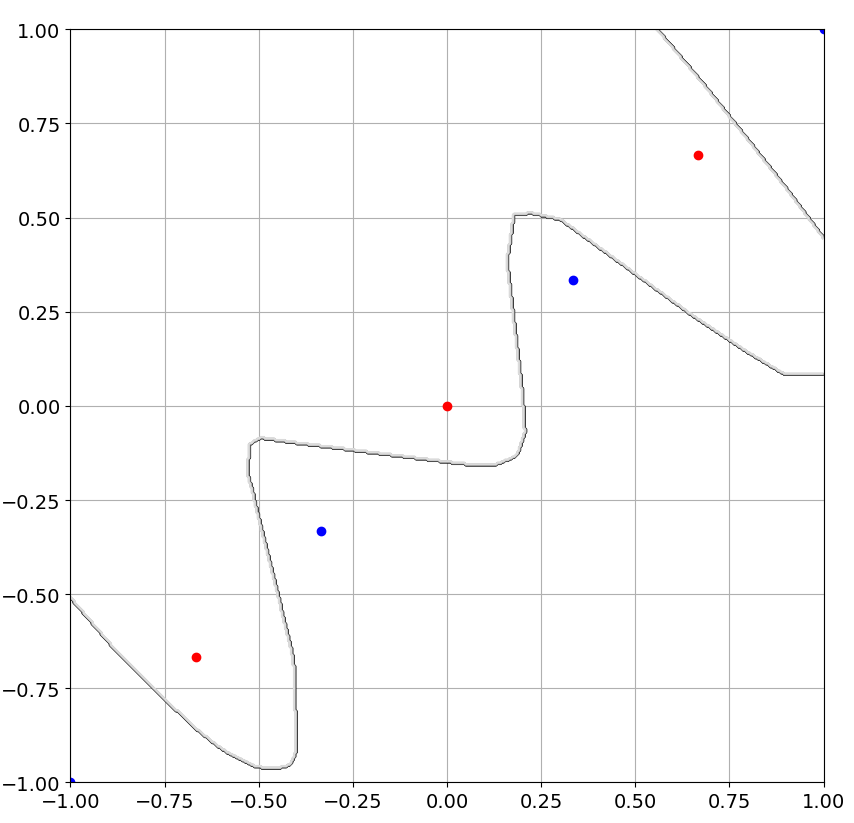}
     \caption{}
     \label{fig: 1d clean}
\end{subfigure}
\begin{subfigure}[b]{0.3\textwidth}
    \centering
     \includegraphics[width=\textwidth]{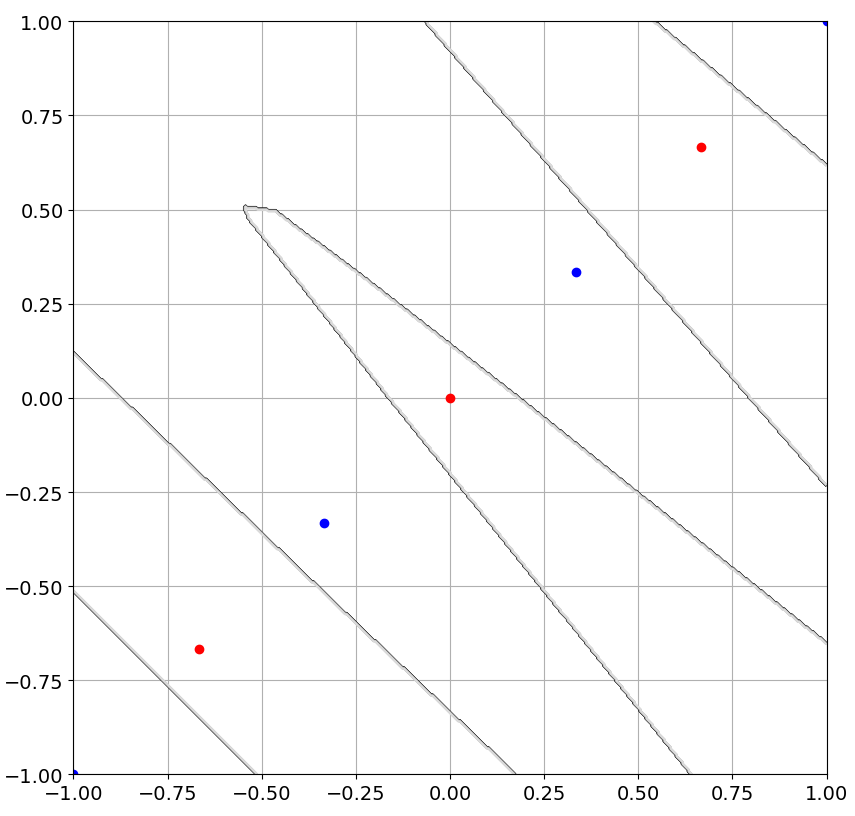}
     \caption{}
     \label{fig: 1d small init}
\end{subfigure}
\begin{subfigure}[b]{0.3\textwidth}
    \centering
     \includegraphics[width=\textwidth]{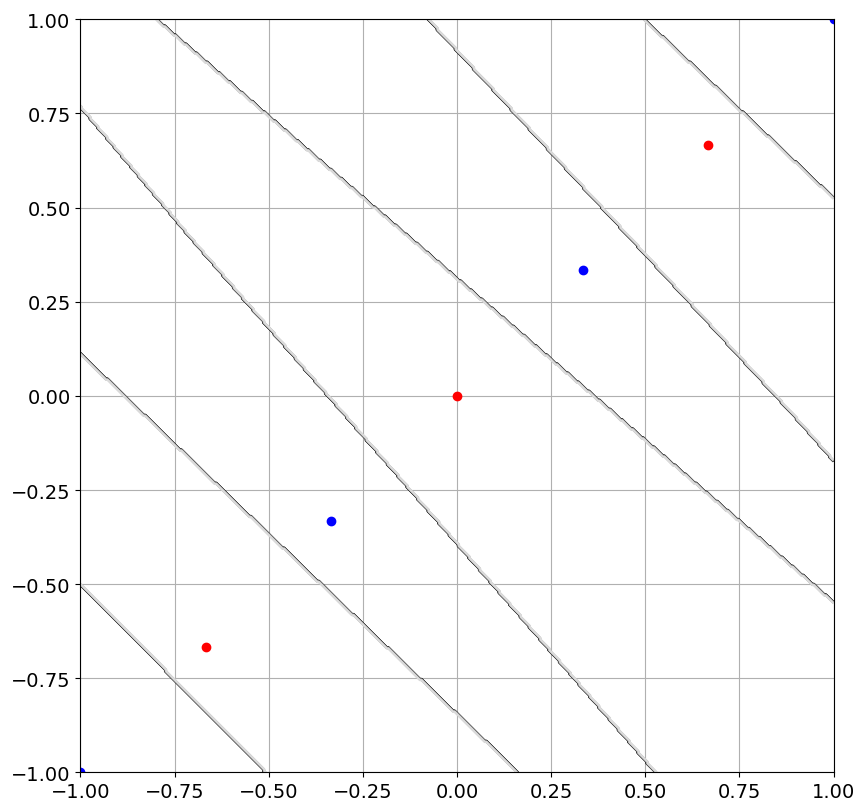}
     \caption{}
     \label{fig: 1d regular}
\end{subfigure}
\caption{\textbf{Experiments on a one-dimensional dataset.} We plot the dataset points and the decision boundary in 3 settings: (a) Vanilla trained network, (b) The network's weight are initialized from a smaller variance distribution, and (c) Training with regularization.} \label{fig:1d data}
\end{figure}

Consider the adversarial perturbation in the direction $P^\perp$, orthogonal to the data subspace, in Figures \ref{fig:1d data} and \ref{fig:2d data}. In figure (a) of each experiment, we can see that a rather small adversarial perturbation is needed to cross the boundary in the subspace orthogonal to the data. In the middle figure (b), we see that the boundary in the orthogonal subspace is much further. This is a direct result of the projection of the weights onto this subspace being much smaller. In the right experiment (c), we can see a similar effect created by regularization. 
In \appref{appen: experiments} we add the full default-scaled experiment in the two-dimensional setting to demonstrate the robustness effect. There, in both the small-initialization and regularization experiments, the boundary lines are almost orthogonal to the data subspace.
In \appref{appen: experiments} we also conduct further experiments with deeper networks and standard PyTorch initialization, showing that our theoretical results are also observed empirically in settings going beyond our theory.

\begin{figure}[!ht]
\centering
\begin{subfigure}[b]{0.3\textwidth}
    \centering
     \includegraphics[width=\textwidth]{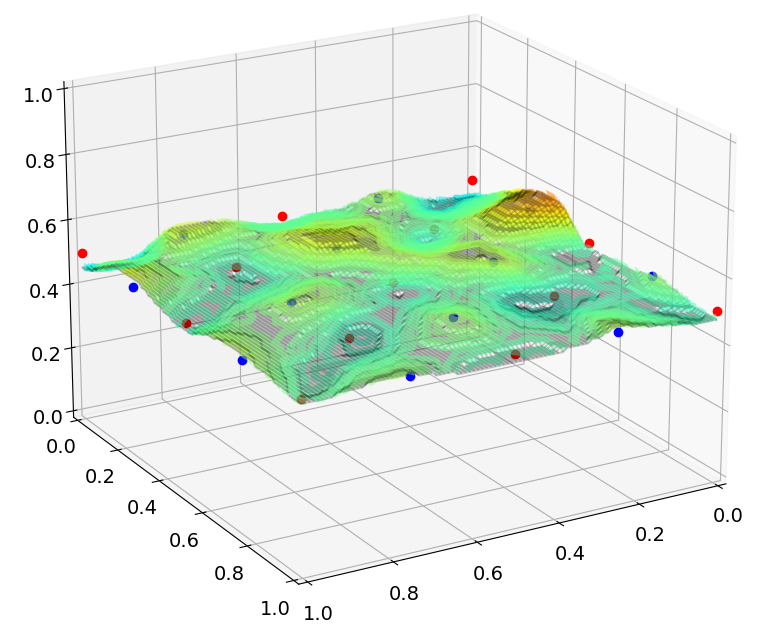}
     \caption{}
     \label{fig: 2d clean}
\end{subfigure}
\begin{subfigure}[b]{0.3\textwidth}
    \centering
     \includegraphics[width=\textwidth]{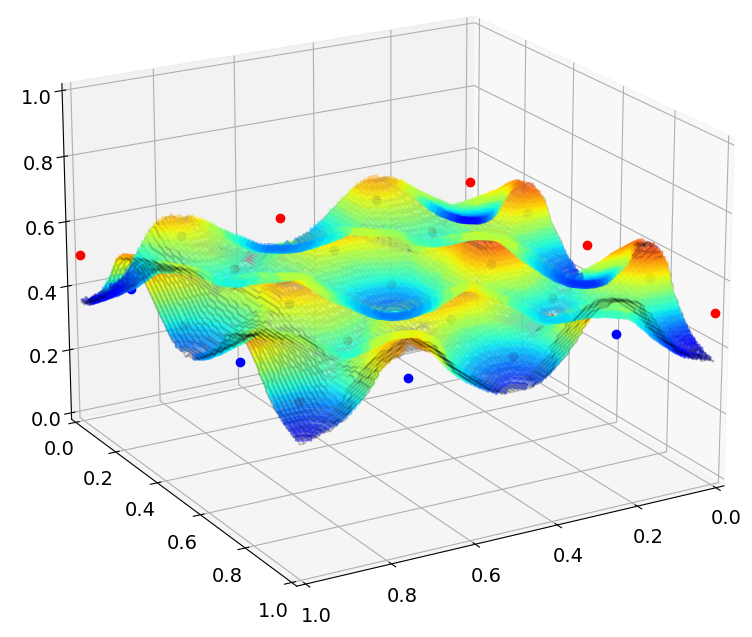}
     \caption{}
     \label{fig: 2d small init}
\end{subfigure}
\begin{subfigure}[b]{0.3\textwidth}
    \centering
     \includegraphics[width=\textwidth]{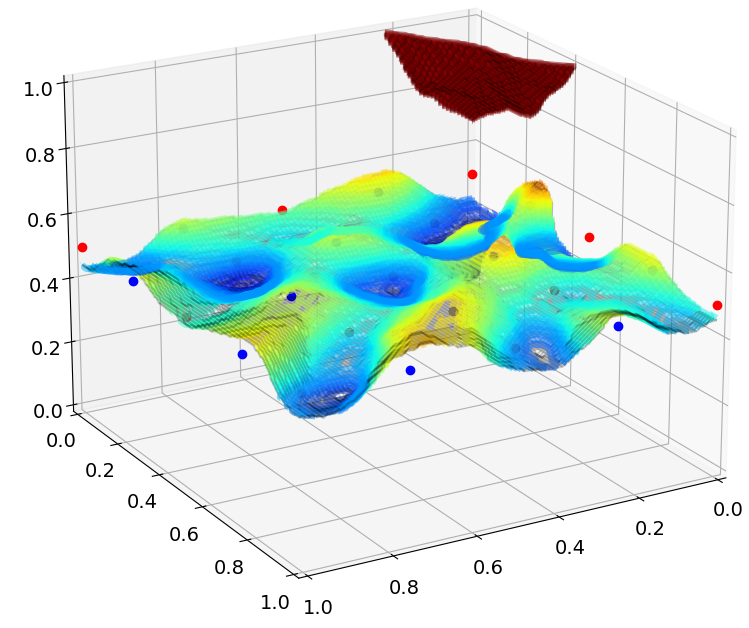}
     \caption{}
     \label{fig: 2d regular}
\end{subfigure}
\caption{\textbf{Experiments on two-dimensional dataset demonstrating a smaller robustness effect.} We plot the dataset points and the decision boundary in 3 settings: (a) Vanilla trained network, (b) The network's weights are initialized from a smaller variance distribution, and (c) Training with regularization. Colors are used to emphasise the values in the $z$ axis.} \label{fig:2d data}
\end{figure}

In \figref{fig:distance boundary} we plot the distance from the decision boundary for different initialization scales of the first layer.  We trained a $3$-layer network, initialized using standard initialization except for the first layer which is divided by the factor represented in the $X$-axis. After training, we randomly picked $200$ points and used a standard projected gradient descent adversarial attack to change the label of each point, which is described in the $Y$-axis (perturbation norm, with error bars). The datasets are: (a) Random points from a sphere with $28$ dimensions, which lies in a space with $784$ dimensions; and (b) MNIST, where the data is projected on $32$ dimensions using PCA. The different lines are adversarial attacks projected either on data subspace, on its orthogonal subspace, or without projection. It can be seen that small initialization increases robustness off the data subspace, and also on the non-projected attack, while having almost no effect for the attacks projected on the data subspace.

\begin{figure}[!ht]
\centering
\begin{subfigure}[b]{0.45\textwidth}
    \centering
     \includegraphics[width=\textwidth]{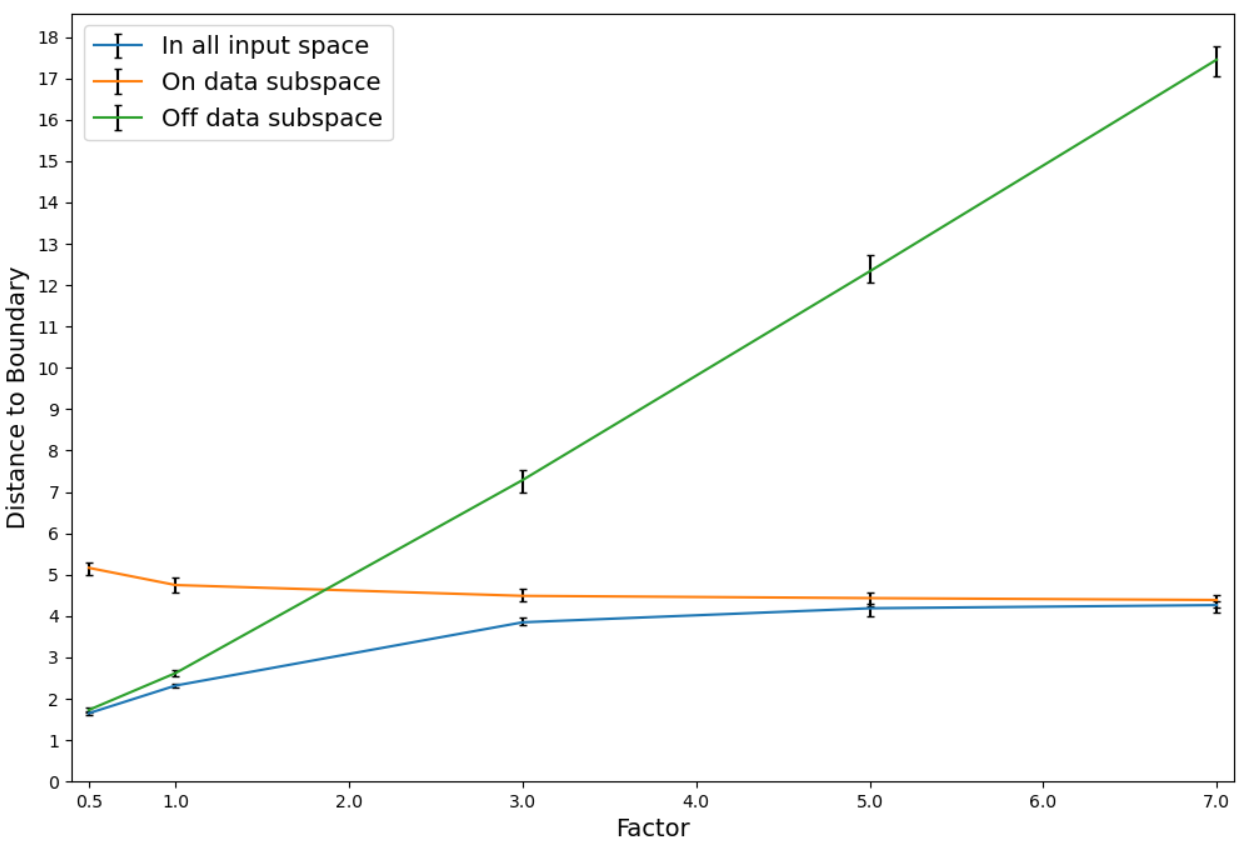}
     \caption{}
\end{subfigure}
\begin{subfigure}[b]{0.45\textwidth}
    \centering
     \includegraphics[width=\textwidth]{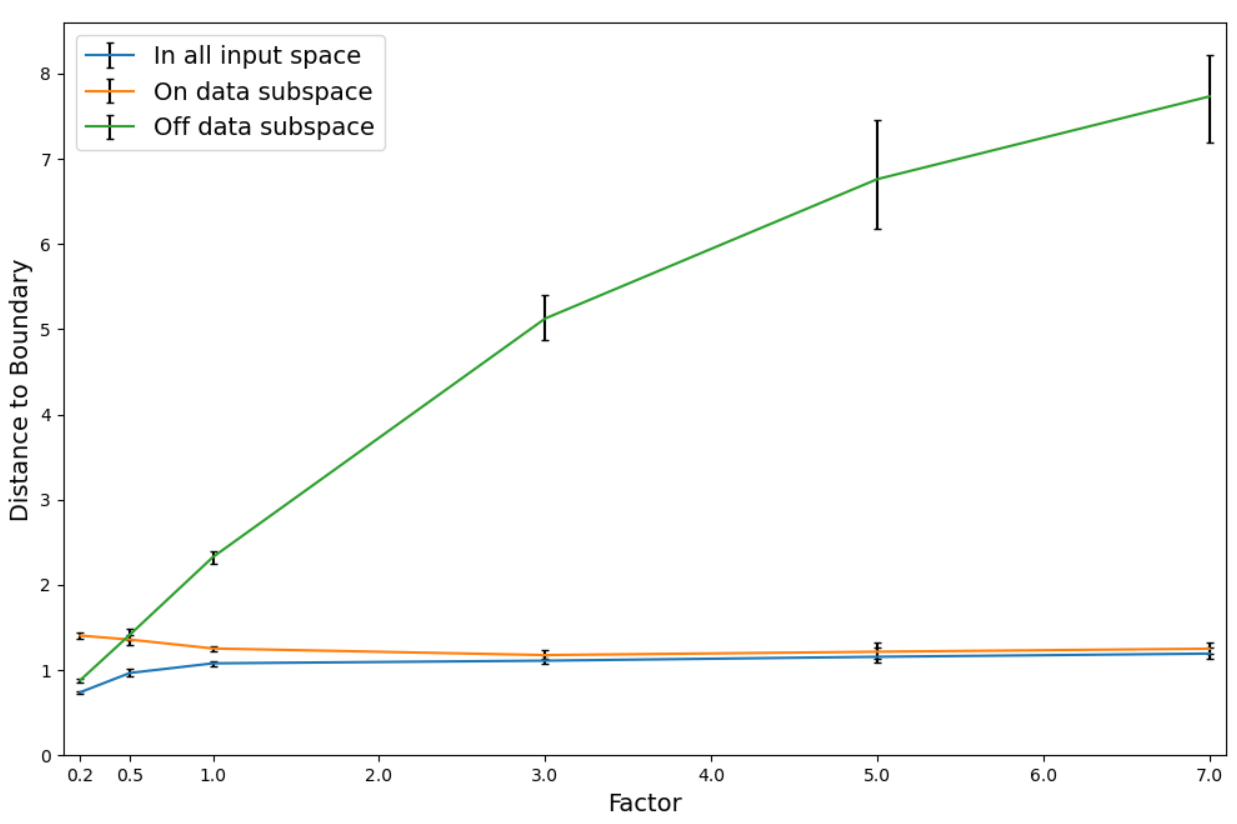}
     \caption{}
\end{subfigure}
\caption{\textbf{The distance to the decision boundary for different initializations of the first layer.} The $X$-axis represents the factor which the initialization of the first layer is divided by. The $Y$-axis shows the size of a standard perturbed gradient descent adversarial attack to change the label of each point for $200$ randomly picked points.
The datasets are: (a) Random points from a sphere with $28$ dimensions, which lies in a space with $784$ dimensions; and (b) MNIST, where the data is projected on $32$ dimensions using PCA.  The different lines are adversarial attacks projected either on data subspace, on its orthogonal subspace, or without projection. 
}\label{fig:distance boundary}
\end{figure}
\section{Conclusions and Future Work}\label{sec:conclusions}
In this paper we considered training a two-layer network on a dataset lying on $P\subseteq \reals^d$ where $P$ is a $d-\ell$ dimensional subspace. We have shown that the gradient of any point $x_0\in P$ projected on $P^\perp$ is large, depending on the dimension of $P$ and the fraction of active neurons on $x_0$. We additionally showed that there exists an adversarial perturbation in the direction of $P^\perp$. The size of the perturbation depends in addition on the output of the network on $x_0$, which by Remark \ref{remark:margin} should be poly-logarithmic in $d$, at least for points which lie on the margin of the network. Finally, we showed that by either decreasing the initialization scale or adding $L_2$ regularization we can make the network robust to ``off-manifold'' perturbations, by decreasing the gradient in this direction.

One interesting question is whether our results can be generalized to other manifolds, beyond linear subspaces. We state this as an informal open problem:

\begin{open problem}
    Let $M$ be a manifold, and $\mathcal{D}$ a distribution over $M\times\{\pm 1\}$. Suppose we train a network $N:\reals^d\rightarrow\reals$ on a dataset sampled from $\mathcal{D}$. Let $x_0\in M$, then under what conditions on $M,~\mathcal{D}$ and $N$, there exists a small adversarial perturbation in the direction of $\left(T_{x_0}M\right)^\perp$, i.e. orthogonal to the tangent space $T_{x_0} M$, of $M$ at $x_0$.
\end{open problem}
Our result can be seen as a special case of this conjecture, where at all points $x,x'\in M$, the tangent spaces are equal $T_xM = T_{x'}M$. Another future direction would be to analyze deeper networks, or different architectures such as convolutions. Finally, it would also be interesting to analyze robustness of trained networks w.r.t. different norms such as $L_1$ or $L_\infty$.


\subsection*{Acknowledgments}
We thank Ohad Shamir for the many helpful discussions about this work. We would also like to thank Michal Irani for contributing computational resources. GY was supported in part by the European Research Council (ERC) grant 754705 . GV acknowledges the support of the NSF and the Simons Foundation for the Collaboration on the Theoretical Foundations of Deep Learning.
\bibliographystyle{plainnat}
\bibliography{bib}

\newpage

\appendix
\newcommand*{\horzbar}{\rule[.5ex]{2.5ex}{0.5pt}}
\newcommand*{\vertbar}{\rule[-1ex]{0.5pt}{2.5ex}}

\section{Rotation Invariance w.r.t. the Initialized Weights}

In this paper, we analyze neural networks trained on high-dimensional data that lies on a low dimensional linear subspace denoted by $P$. We assume that the dimension of $P$ is $d-\ell$. Throughout the paper it will be more convenient to analyze data which lies on the subspace $M=\text{span}(\{e_1,\ldots,e_{d-\ell}\})$, because then the ``off manifold'' directions correspond exactly to certain coordinates of the input. In this section we show that we can essentially analyze the data as if it is rotated to lie on $M$, and it would imply the same consequences as the original data from $P$.


\begin{theorem}\label{thm:rotation invariant}
    Let $P\subseteq \reals^d$ be a subspace of dimension $d-\ell$, and let $M=\text{span}\{e_1,\dots,e_{d-\ell}\}$. Let $R$ be an orthogonal matrix such that $R\cdot P = M$, let $X\subseteq P$ be a training dataset and let $X_R = \{R\cdot x: x\in X\}$. Assume we train a neural network $N(x)= \sum_{i=1}^m u_i\sigma(w_i^\top x)$ as explained in \secref{frameworksection}, and denote by $N^X$ and $N^{X_R}$ the network trained on $X$ and $X_R$ respectively for the same number of iterations. Let $x_0\in P$, then we have:
    \begin{enumerate}
        \item W.p. $p$ (over the initialization) we have $\left\|\Pi_{P^\perp}\left(\frac{\partial N^X(x_0)}{\partial x} \right)\right\| \geq c$ (resp. $\leq c$) for some $c\in \reals$, iff w.p. $p$ also $\left\|\Pi_{M^\perp}\left(\frac{\partial N^{X_R}(Rx_0)}{\partial x} \right)\right\| \geq c$ (resp. $\leq c$).
        \item For any $c,p \geq 0$, w.p. $p$ (over the initialization) there exists $z \in P^\perp$ with $\norm{z}=c$ such that $\sign(N^X(x_0 + z)) \neq \sign(N^X(x_0))$, iff w.p. $p$ there exists $z' \in M^\perp$ with $\norm{z'}=c$ such that $\sign(N^{X_R}(Rx_0 + z')) \neq \sign(N^{X_R}(Rx_0))$.
    \end{enumerate}
\end{theorem}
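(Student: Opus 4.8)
The plan is to exploit the rotation invariance of the entire training pipeline. Fix an orthogonal matrix $R$ with $R\cdot P=M$; since $R$ preserves inner products, we also have $R\cdot P^\perp=M^\perp$ (both are $\ell$-dimensional, and $Rv\perp Rp$ whenever $v\perp p$). Define the map $\Phi$ on weight configurations that sends the first-layer weights $(w_1,\dots,w_m)$ to $(Rw_1,\dots,Rw_m)$ and leaves the (fixed) second layer $(u_1,\dots,u_m)$ unchanged. The two facts we need are: (i) $\Phi$ pushes the initialization distribution to itself, because each $w_i\sim\Ncal(\zero,\frac{1}{d}I_d)$ is rotationally invariant and the $u_i$'s are untouched; and (ii) the network is equivariant under $\Phi$: for any weights $\bw_{1:m}$ and any input $y$,
\[
N(Ry,\Phi(\bw_{1:m})) = \sum_{i=1}^m u_i\sigma(\inner{Rw_i,Ry}) = \sum_{i=1}^m u_i\sigma(\inner{w_i,y}) = N(y,\bw_{1:m})~.
\]

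First I would show, by induction on the training step $t$, that under the coupling which starts $N^{X_R}$ from $\Phi(\bw^{(0)}_{1:m})$ whenever $N^X$ starts from $\bw^{(0)}_{1:m}$ (and, in the SGD case, feeds both runs the same sequence of mini-batch indices), one has $\bw^{(t)}_{1:m}$ for $N^{X_R}$ equal to $\Phi$ applied to $\bw^{(t)}_{1:m}$ for $N^X$, for every $t$. The base case is the coupling itself. For the inductive step, the gradient of a per-example loss w.r.t.\ $w_i$ is $L'(y_j N(x_j,\bw_{1:m}))\cdot y_j\cdot u_i\cdot\sigma'(\inner{w_i,x_j})\cdot x_j$; replacing $(x_j,\bw_{1:m})$ by $(Rx_j,\Phi(\bw_{1:m}))$ leaves every scalar factor unchanged (by equivariance and $\inner{Rw_i,Rx_j}=\inner{w_i,x_j}$) and multiplies $x_j$ by $R$ on the left, so the whole gradient is multiplied by $R$; hence a single (S)GD update commutes with $\Phi$. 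Since the second layer is never updated, this closes the induction.

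Given the endpoint identity $\bw^{(T)}_{1:m}(X_R)=\Phi\big(\bw^{(T)}_{1:m}(X)\big)$, both claims follow. For the gradient, $\frac{\partial N^{X_R}(Rx_0)}{\partial x}=\sum_i u_i\sigma'(\inner{Rw_i,Rx_0})Rw_i = R\cdot\frac{\partial N^X(x_0)}{\partial x}$, and since $R\cdot P^\perp=M^\perp$ with $R$ orthogonal we get $\Pi_{M^\perp}(Rv)=R\,\Pi_{P^\perp}(v)$ and $\norm{R\,\Pi_{P^\perp}(v)}=\norm{\Pi_{P^\perp}(v)}$; thus the two projected-gradient norms are equal pointwise under the coupling, and since $\Phi$ is measure-preserving the corresponding events have equal probability, giving claim 1. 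For claim 2, $z\mapsto Rz$ is a norm-preserving bijection between $P^\perp$ and $M^\perp$, and by equivariance $N^{X_R}(Rx_0+Rz)=N^{X_R}(R(x_0+z))=N^X(x_0+z)$ and $N^{X_R}(Rx_0)=N^X(x_0)$, so a sign flip by $z\in P^\perp$ for $N^X$ corresponds exactly to a sign flip by $Rz\in M^\perp$ for $N^{X_R}$ (and conversely via $R^{-1}$); measure preservation of $\Phi$ then transfers the probability $p$.

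The only genuine subtlety, which I would state precisely rather than belabor, is the coupling of the non-Gaussian randomness: in the SGD case both runs must use the same mini-batch index sequence so that the equivariance of the update is preserved step by step, and the shared sign pattern of the fixed second layer must be part of the coupling as well. Once those are pinned down, the argument reduces to the rotational invariance of the Gaussian initialization together with the elementary equivariance identities above, with no heavy computation.
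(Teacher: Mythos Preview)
Your proposal is correct and follows essentially the same approach as the paper: couple the two runs via the rotation $\Phi$, show by induction on training steps that the gradients (and hence the updates) are related by $R$, and then invoke the rotational invariance of the Gaussian initialization to transfer probabilities. You are in fact slightly more careful than the paper, which leaves the SGD mini-batch coupling, the shared second-layer signs, and the explicit derivation of item~(1) implicit.
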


\begin{proof}
    Denote by $\bw_{1:m}:= (w_1,\dots,w_m)$ and by $R\bw_{1:m} = (Rw_1,\dots,Rw_m)$.  Let $\bw_{1:m}^{(t)}$ the weights of the network trained on the dataset $X$ where $\bw_{1:m}^{(0)}$ is some initialization, and $\tilde{\bw}_{1:m}^{(t)} =(\tilde{w}_1^{(t)},\dots,\tilde{w}_m^{(t)})$ the weights of the network trained on $X_R$ and initialized at $R\bw_{1:m}^{(0)}$. In the proof, when taking derivatives w.r.t. the $w_i$'s we will explicitly write $N(x,\bw_{1:m})$.
    
    
    We first show by induction on the number of training steps that $\tilde{\bw}_{1:m}^{(t)} = R{\bw}_{1:m}^{(t)}$. For $t=0$ it is clear by the assumption on the initialization. Assume it is true for $t$, then we have for some $x\in X$:
     \begin{align*}
        \frac{\partial N(Rx, \tilde{\bw}_{1:m}^{(t)})}{\partial w_i} & = u_i \sigma'(\inner{\tilde{w}_i^{(t)}, Rx})Rx \nonumber\\
        & = u_i \sigma'(\inner{Rw_i^{(t)}, Rx})Rx \nonumber\\
        & = u_i \sigma'(\inner{w_i^{(t)}, x})Rx \nonumber\\
        & =R\cdot  \frac{\partial N(x, \bw_{1:m}^{(t)})}{\partial {w}_i}~.
    \end{align*}   
    This is true for every $i\in[m]$ and for every $x\in X$. Also note that by our induction assumption we have:
    \begin{align}\label{eq:equal after rotation}
        N(x, \bw_{1:m}^{(t)}) = \sum_{i=1}^m u_i\sigma(\inner{w_i^{(t)}, x}) = \sum_{i=1}^m u_i\sigma(\inner{Rw_i^{(t)}, Rx}) = N(Rx, \tilde{\bw}_{1:m}^{(t)})~.        
    \end{align}
    Finally, the derivative of the loss on a single data point $x\in X$ with label $y$ can be written as:
    \[
    \frac{\partial L\left( N(x, \bw_{1:m}^{(t)})\cdot y\right)}{\partial w_i} =
    L'\left( N(x, \bw_{1:m}^{(t)})\cdot y\right)\cdot \frac{\partial N(x, \bw_{1:m}^{(t)})}{\partial w_i}~,
    \]
    where the first term depends only on the value of $N(x, \bw_{1:m}^{(t)})$. Hence, taking a single gradient step of $N$ with weights $\bw_{1:m}^{(t)}$ and dataset $X$
    will change the weights by the same term up to multiplication by $R$ as if taking a gradient step with with weights $\tilde{\bw}_{1:m}^{(t)}$ and dataset $X_R$. This finishes the induction.
    
    Let $\bw_{1:m}^{(0)}$ be an initialization for the training of $N^X$, where there exists $z \in P^\perp$ with $\norm{z}=c$ such that $\sign(N^X(x_0 + z)) \neq \sign(N^X(x_0))$. Then, by \eqref{eq:equal after rotation} the initialization $R \bw_{1:m}^{(0)}$ for the training of $N^{X_R}$ is such that for $z' = R z$ we have $\norm{z'}=c$ and $\sign(N^{X_R}(Rx_0 + z')) \neq \sign(N^{X_R}(Rx_0))$. This argument holds also in the opposite direction. 
    Let $A\subseteq \{\bw_{1:m}\in \reals^{d\cdot m}\}$ be the set of all initializations to $N^X$ where there exists $z \in P^\perp$ with $\norm{z}=c$ such that $\sign(N^X(x_0 + z)) \neq \sign(N^X(x_0))$, then by the above the set $R\cdot A = \{R\bw_{1:m}: \bw_{1:m}\in A\}$ are exactly all the initializations to $N^{X_R}$ where there exists $z' \in M^\perp$ with $\norm{z'}=c$ such that $\sign(N^{X_R}(Rx_0 + z')) \neq \sign(N^{X_R}(Rx_0))$. Since we initialize the $w_i$'s using a Gaussian initialization which is spherically symmetric, we have that $\Pr(A) = \Pr(RA)$. This proves item (2).
    Item (1) follows from similar arguments (which we do not repeat for conciseness).
\end{proof}

Under the assumption that the data lies on $M=\text{span}\{e_1,\dots,e_{d-\ell}\}$, and no regularization is used, we can show that the weights of the first layer projected on $M^\perp$ do not change during training. This is an essential part of the proofs, as it allows us to analyze those weights as random Gaussian vectors, and apply concentration bounds on them.

\begin{theorem}\label{thm:weights dont change}
    Let $M=\text{span}\{e_1,\dots,e_{d-\ell}\}$. Assume we train a neural network $N(x, \bw_{1:m}) := \sum_{i=1}^m u_i\sigma(w_i^\top x)$ as explained in \secref{frameworksection} (where $\bw_{1:m} = (w_1,\dots,w_m)$). Denote by $\hat{w}:= \Pi_{M^\perp}(w)$ for $w\in\reals^d$, then after training, for each $i\in[m]$, $\hat{w}_i$ did not change from their initial value.
\end{theorem}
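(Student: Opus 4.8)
The plan is to show that a single gradient step (whether a full GD step or an SGD step) never changes the $M^\perp$-component of any weight vector, and then conclude by induction on the number of steps. The crucial observation is that the gradient of the loss with respect to any $w_i$ is always a linear combination of the training inputs $x_1,\dots,x_r$, all of which lie in $M$ by assumption.

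Concretely, I would first compute, for a single labeled example $(x_j,y_j)$ with $x_j\in M$,
\[
\frac{\partial L\big(y_j\, N(x_j,\bw_{1:m})\big)}{\partial w_i} = L'\big(y_j\, N(x_j,\bw_{1:m})\big)\cdot y_j\cdot u_i\,\sigma'(\inner{w_i,x_j})\,x_j~,
\]
which is a scalar multiple of $x_j$. (At the ReLU kink $\inner{w_i,x_j}=0$ one fixes any subgradient convention, e.g.\ $\sigma'(0)=0$; this is immaterial, since any subgradient of $\sigma$ at a point, multiplied by $x_j$, is still parallel to $x_j$.) Summing over a mini-batch (for SGD) or over the whole dataset (for GD) therefore yields an update direction $g_i^{(t)} \in \text{span}\{x_1,\dots,x_r\} \subseteq M$. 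Hence the update has the form $w_i^{(t+1)} = w_i^{(t)} - \eta\, g_i^{(t)}$ with $w_i^{(t+1)} - w_i^{(t)} \in M$.

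Next I would apply $\Pi_{M^\perp}$, which is a linear map vanishing on $M$: from $w_i^{(t+1)} - w_i^{(t)} \in M$ we get $\Pi_{M^\perp}\big(w_i^{(t+1)}\big) = \Pi_{M^\perp}\big(w_i^{(t)}\big)$, that is $\hat{w}_i^{(t+1)} = \hat{w}_i^{(t)}$. A routine induction on $t$, with the trivial base case $t=0$, then gives $\hat{w}_i^{(t)} = \hat{w}_i^{(0)}$ for all $t\ge 0$ and all $i\in[m]$, which is exactly the claim. (This is also what makes the concentration arguments in the later proofs legitimate: the vectors $\hat{w}_i$ remain i.i.d.\ Gaussian throughout training.)

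There is no genuine obstacle here; the statement is essentially a one-line consequence of the chain rule together with the orthogonal decomposition $\reals^d = M \oplus M^\perp$. The only points requiring a word of care are (i) treating the ReLU non-differentiability uniformly across all steps, handled by fixing a subgradient convention as above, and (ii) observing that the argument is agnostic to GD versus SGD, since in both cases the per-step update is a finite sum of per-example gradients, each parallel to some $x_j \in M$.
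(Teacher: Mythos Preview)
Your proposal is correct and follows essentially the same approach as the paper: both compute the per-example gradient $\partial L/\partial w_i$ via the chain rule, observe it is a scalar multiple of the input $x_j\in M$, and conclude that its projection onto $M^\perp$ vanishes. Your write-up is slightly more explicit (induction over steps, subgradient convention, GD vs.\ SGD), but there is no substantive difference.
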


\begin{proof}
    Note that for each $i\in [m]$ and $x\in M$ we have:
    \begin{align*}
         \Pi_{M^\perp}\left(\frac{\partial N(x,\bw_{1:m})}{\partial w_i}\right) = \Pi_{M^\perp}\left(u_i\sigma'(w_i^\top x) x\right) = u_i \sigma'(w_i^\top x)\hat{x} = \bm{0}~.
    \end{align*}
    Taking the derivative of the loss we have:
    \begin{align*}
    \Pi_{M^\perp}\left(\frac{\partial L\left( N(x,\bw_{1:m})\cdot y\right)}{\partial w_i}\right) &= \Pi_{M^\perp}\left(L'\left( N(x,\bw_{1:m})\cdot y\right)\cdot \frac{\partial  N(x,\bw_{1:m})}{\partial w_i}\right)
    \\
    & = L'\left( N(x,\bw_{1:m})\cdot y\right)\cdot\Pi_{M^\perp}\left(\frac{\partial  N(x,\bw_{1:m})}{\partial w_i}\right) = \bm{0}~.
    \end{align*}
    The above calculation did not depend on the specific value of the $w_i$'s. Hence, the value of the $\hat{w}_i$'s for every $i\in[m]$ did not change  during training from their initial value.
\end{proof}

\section{Proofs from \secref{sec:large gradient}}\label{appen:proofs from large gradient}

Before proving the main theorem, we will first need the next two lemmas about the concentration of Gaussian random variables:

\begin{lemma}
\label{randomnormlarge}

    Let $w\in\reals^n$ such that $w \sim \mathcal{N}(\zero,\sigma^2 I_n)$. Then:

        \[ \mathbb{P} \left[\norm{w}^2 \leq \frac{1}{2} \sigma^2 n \right] \leq e^{-\frac{n}{16}}~.\] 
\end{lemma}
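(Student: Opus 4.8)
The plan is to establish the lower-tail bound for the squared norm of a Gaussian vector via a standard Chernoff/moment-generating-function argument applied to a chi-squared random variable. First I would reduce to the standard normal case: writing $w = \sigma v$ with $v \sim \mathcal{N}(\zero, I_n)$, the event $\{\norm{w}^2 \le \tfrac12 \sigma^2 n\}$ is exactly $\{\norm{v}^2 \le \tfrac12 n\}$, so it suffices to bound $\mathbb{P}[\norm{v}^2 \le n/2]$ where $\norm{v}^2 = \sum_{j=1}^n v_j^2$ is a $\chi^2_n$ variable.

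Next I would apply the exponential Markov inequality to the lower tail: for any $t > 0$,
\[
\mathbb{P}\left[\norm{v}^2 \le \tfrac{n}{2}\right] = \mathbb{P}\left[e^{-t\norm{v}^2} \ge e^{-tn/2}\right] \le e^{tn/2}\,\mathbb{E}\!\left[e^{-t\norm{v}^2}\right] = e^{tn/2}\left(\mathbb{E}\!\left[e^{-t v_1^2}\right]\right)^n = e^{tn/2}(1+2t)^{-n/2}~,
\]
using independence of the coordinates and the well-known formula $\mathbb{E}[e^{-t v_1^2}] = (1+2t)^{-1/2}$ for $t > -\tfrac12$. Taking logarithms, the exponent is $\tfrac{n}{2}\left(t - \log(1+2t)\right)$. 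I would then optimize over $t$; the derivative $1 - \tfrac{2}{1+2t}$ vanishes at $t = \tfrac12$, giving exponent $\tfrac{n}{2}\left(\tfrac12 - \log 2\right) = -\tfrac{n}{2}\left(\log 2 - \tfrac12\right)$. Since $\log 2 - \tfrac12 \approx 0.1931 > \tfrac18 = 0.125$, this yields $\mathbb{P}[\norm{v}^2 \le n/2] \le e^{-n(\log 2 - 1/2)/2} \le e^{-n/16}$, which is the claimed bound. (Alternatively, one could simply plug in the convenient choice $t = \tfrac12$ directly without calling it an optimization, since the resulting constant already beats $\tfrac{1}{16}$.)

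I do not anticipate a genuine obstacle here — this is a routine sub-Gaussian/sub-exponential concentration computation. The only minor care needed is verifying the numerical inequality $\log 2 - \tfrac12 \ge \tfrac18$ so that the clean exponent $-n/16$ is valid, and making sure the MGF formula $\mathbb{E}[e^{-tv_1^2}] = (1+2t)^{-1/2}$ is invoked correctly (it holds for all $t \ge 0$, which is the regime we use). If one wanted to avoid even this, one could cite a standard chi-squared tail bound (e.g., Laurent–Massart), but the self-contained two-line Chernoff argument is cleaner for the paper.
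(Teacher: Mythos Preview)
Your argument is correct. The reduction to $\chi^2_n$, the exponential Markov step, the MGF computation $\mathbb{E}[e^{-tv_1^2}]=(1+2t)^{-1/2}$, the optimization at $t=\tfrac12$, and the numerical check $\log 2 - \tfrac12 > \tfrac18$ are all fine.

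The paper takes a different (though equally standard) route: it simply invokes the Laurent--Massart chi-squared tail inequality $\Pr[n - \|w/\sigma\|^2 \ge 2\sqrt{nt}] \le e^{-t}$ and plugs in $t = n/16$, which gives $2\sqrt{nt} = n/2$ and hence exactly the stated bound. So the paper's proof is a one-line citation, whereas yours is a self-contained Chernoff computation. Your version has the advantage of being elementary and in fact yields the slightly sharper exponent $-\tfrac{n}{2}(\log 2 - \tfrac12) \approx -0.0966\,n$ before relaxing to $-n/16$; the paper's version is shorter on the page by outsourcing to a known lemma. You already anticipated this alternative in your final remark, so you are aware of both options.
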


\begin{proof}
	Note that $\norm{\frac{w}{\sigma}}^2$ has the Chi-squared distribution. A concentration bound by Laurent and Massart \citep[Lemma 1]{laurent2000adaptive} implies that for all $t > 0$ we have
	\[
	 	\Pr\left[ n - \norm{\frac{w}{\sigma}}^2 \geq 2\sqrt{nt} \right] \leq e^{-t}~.
	\]
	Plugging-in $t=\frac{n}{16}$, we get
	\begin{align*}
        \Pr\left[ n - \norm{\frac{w}{\sigma}}^2 \geq \frac{1}{2} n \right] 
		= \Pr\left[\norm{\frac{w}{\sigma}}^2 \leq \frac{1}{2} n  \right] 
		\leq e^{-n/16}~.
	\end{align*}
	Thus, we have
	\begin{equation*} 
		\Pr\left[ \norm{w}  \leq \sigma \sqrt{\frac{n}{2}} \right] 
		\leq e^{-n/16}~.
	\end{equation*}
\end{proof}

\begin{lemma}
\label{randomlydirected}
    Let $w_1,\ldots,w_m \in\reals^n$ such that for all $i \in [m]$, $w_i \sim \mathcal{N}(\zero,\sigma^2 I_n)$, then we have: 
     \[
         \mathbb{P} \left[\norm{\sum\limits_{i=1}^m w_i}^2 \leq \frac{1}{2} m \sigma^2 n \right] \leq e^{-\frac{n}{16}}~. 
     \]
\end{lemma}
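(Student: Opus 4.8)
The plan is to reduce the statement immediately to \lemref{randomnormlarge}. The key observation is that a sum of independent centered Gaussian vectors is again a centered Gaussian vector, with covariances adding up. Concretely, set $W := \sum_{i=1}^m w_i$. Since the $w_i$ are independent with $w_i \sim \mathcal{N}(\zero,\sigma^2 I_n)$, the vector $W$ has mean $\zero$ and covariance $\sum_{i=1}^m \sigma^2 I_n = m\sigma^2 I_n$, so $W \sim \mathcal{N}(\zero, m\sigma^2 I_n)$. If one prefers to avoid citing this as a fact, it follows coordinatewise: the $j$-th coordinate of $W$ is a sum of $m$ independent $\mathcal{N}(0,\sigma^2)$ scalars, hence is $\mathcal{N}(0,m\sigma^2)$, and distinct coordinates of $W$ remain independent because each $w_i$ has independent coordinates.

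Having established that $W \sim \mathcal{N}(\zero, m\sigma^2 I_n)$, I would simply apply \lemref{randomnormlarge} to $W$ with the variance parameter $m\sigma^2$ in place of $\sigma^2$. That lemma then yields
\[
\mathbb{P}\left[\norm{W}^2 \leq \tfrac{1}{2}\,(m\sigma^2)\, n\right] \leq e^{-n/16}~,
\]
which is exactly the claimed bound $\mathbb{P}\left[\norm{\sum_{i=1}^m w_i}^2 \leq \tfrac{1}{2} m \sigma^2 n\right] \leq e^{-n/16}$.

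There is essentially no obstacle; the only point requiring a word of justification is the reduction step, i.e.\ that the sum of independent isotropic Gaussians is an isotropic Gaussian with the stated covariance. As an alternative to invoking \lemref{randomnormlarge} as a black box, one could note directly that $\norm{W}^2/(m\sigma^2)$ is chi-squared with $n$ degrees of freedom and re-run the Laurent--Massart concentration inequality \citep[Lemma 1]{laurent2000adaptive} with $t = n/16$, exactly as in the proof of \lemref{randomnormlarge}; this gives the same conclusion.
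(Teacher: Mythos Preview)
Your proposal is correct and matches the paper's proof essentially line for line: the paper also defines the sum vector, argues coordinatewise that it is $\mathcal{N}(\zero, m\sigma^2 I_n)$, and then applies \lemref{randomnormlarge} with variance $m\sigma^2$.
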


\begin{proof}
    We denote the $j$-th coordinate of the vector $w_i \in\reals^n$ by $w_{i,j}$. Note, for any $i \in \{1,\ldots,m\}$ and $j \in \{1,\ldots,n\}$ we have $w_{i,j} \sim \mathcal{N}(0,\sigma^2)$. We denote by $s$ the sum vector $s :=\sum\limits_{i=1}^m w_{i}$, and by $s_j$ the $j$- th coordinate of $s$. By this definition, $s_j  = \sum\limits_{i=1}^m w_{i,j}$ is a sum of $m$ independent Gaussian variables and therefore also a Gaussian variable. Particularly, $s \sim \mathcal{N}(\zero,m \sigma^2 I_n)$.  We use Lemma~\ref{randomnormlarge} with variance $m \sigma^2$ and get that:
    \[\mathbb{P} \left[\norm{\sum\limits_{i=1}^m w_i}^2 \leq \frac{1}{2} m \sigma^2 n \right] \leq e^{-\frac{n}{16}}~.\]

\end{proof}

We are now ready to prove the main theorem of this section:

\begin{proof}[Proof of \thmref{largegradient}]

Let $M = \text{span}\{e_1,\dots,e_{d-\ell}\}$. By \thmref{thm:rotation invariant}(1), given a training dataset $X\subseteq P$, it is enough to consider a training set $X_R = \{Rx: x\in X\}$, where $R$ is an orthogonal matrix such that $R\cdot P = M$, and training is done over $X_R$. From now on, we assume that the training data, as well as $x_0$ lie on $M$, and the consequences of this proof would also imply for a dataset $X$ and $x_0\in P$.



The projection of the gradient on $M^\perp$ is equal to:
\[
    \Pi_{M^\perp}\left(\frac{\partial N(x_0)}{\partial x}\right) = \Pi_{M^\perp}\left(\sum\limits_{i=1}^{m} u_i w_i \mathbbm{1}_{\inner{w_i,x_0} \geq 0}\right) = \sum\limits_{i=1}^{m} \Pi_{M^\perp}\left(u_i w_i\right)  \mathbbm{1}_{i \in S} =  \sum\limits_{i \in S} \Pi_{M^\perp}\left(u_i w_i\right)~.  
\]
Denote by $\hat{w}_i = (w_i)_{d-\ell+1:d}$, the last $\ell$ coordinates of $w_i$. By \thmref{thm:weights dont change} we get that for every $i\in[m]$, $\hat{w}_i$ did not change from their initial value during training.






Recall that we initialized $\hat{w}_i \sim \mathcal{N}(\zero,\frac{1}{\sqrt{d}}I_{\ell})$.
Note that the set $S$ is independent of the value of the $\hat{w}_i$'s. This is because $\hat{w}_i$ does not effect the training, hence will not effect $w_i - \Pi_{M^\perp}(w_i)
$. Also, after choosing $x_0$ we have $\inner{\hat{w}_i,\hat{x}_0} = 0$, since $\hat{x}_0 = \bm{0}$, which means that the choice of $S$ is independent of the $\hat{w}_i$'s. We can conclude that the random variables $\hat{w}_i$ for $i\in S$ are sampled independently.

Note, since for all $i \in \{1,\ldots,m\}$, $|u_i| = \frac{1}{\sqrt{m}}$ and they are not trained, we get that $u_i\hat{w}_i$ are also Gaussian random variables with the same mean, and variance multiplied by $\frac{1}{{m}}$. Therefore, from Lemma~\ref{randomlydirected} we get that w.p. $\geq 1- e^{-\ell/16}$: 

\[ 
    \norm{  \sum\limits_{i \in S} u_i\hat{w}_i} \geq \sqrt{\frac{1}{2}} \sqrt{\frac{k l}{dm}}~.
\]
Combining the above, we get:
\[
    \left\|\Pi_{M^\perp}\left(\frac{\partial N(x_0)}{\partial x}\right) \right\| \geq \sqrt{\frac{1}{2}}\sqrt{\frac{kl}{dm}}~.
\]


\end{proof}


\section{Proofs from \secref{sec:adv pert exists}}\label{appen:proofs from adv perf exists}

Before proving the main theorem, we prove a few lemmas about concentration of Gaussian random variables:

\begin{lemma}
\label{lem:randomnormsmall}
    Let $w \in \reals^n$ with $w \sim \mathcal{N}(\zero,\sigma^2 I_n)$. Then:
        \[
        \Pr \left[\norm{w}^2 \geq 2 \sigma^2 n \right] \leq e^{-\frac{n}{16}}~.
        \] 
\end{lemma}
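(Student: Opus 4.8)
The plan is to mirror the proof of \lemref{randomnormlarge}, using the same Laurent–Massart concentration inequality \citep{laurent2000adaptive} but invoking its \emph{upper}-tail bound instead of the lower-tail one. First I would normalize: since $w\sim\mathcal{N}(\zero,\sigma^2 I_n)$, the variable $\norm{w/\sigma}^2$ has the chi-squared distribution with $n$ degrees of freedom, so it suffices to show $\Pr\left[\norm{w/\sigma}^2 \geq 2n\right]\leq e^{-n/16}$ and then multiply through by $\sigma^2$.

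Next I would recall the relevant half of \citep[Lemma 1]{laurent2000adaptive}: for a chi-squared variable $X$ with $n$ degrees of freedom and any $t>0$,
\[
\Pr\left[X - n \geq 2\sqrt{nt} + 2t\right] \leq e^{-t}~.
\]
I would then plug in $t = \frac{n}{16}$, which gives $2\sqrt{nt} + 2t = 2\sqrt{n^2/16} + n/8 = \frac{n}{2} + \frac{n}{8} = \frac{5n}{8} \leq n$. Hence $\Pr\left[X - n \geq n\right] \leq \Pr\left[X - n \geq \tfrac{5n}{8}\right] \leq e^{-n/16}$, i.e. $\Pr[X \geq 2n] \leq e^{-n/16}$. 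Rescaling by $\sigma^2$ yields $\Pr\left[\norm{w}^2 \geq 2\sigma^2 n\right] \leq e^{-n/16}$, as claimed.

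There is essentially no obstacle here — the only thing to be careful about is quoting the correct direction of the Laurent–Massart bound (the upper tail has the extra additive $2t$ term that the lower tail lacks) and checking that the chosen $t=n/16$ makes $2\sqrt{nt}+2t$ no larger than $n$; the slack ($\tfrac{5n}{8}$ versus $n$) shows the constant $16$ is comfortably sufficient and matches the constant used in the companion lemmas so that the failure probabilities combine cleanly later in the paper.
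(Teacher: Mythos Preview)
Your proposal is correct and matches the paper's own proof essentially line for line: both normalize to a chi-squared variable, invoke the upper-tail Laurent--Massart bound $\Pr[X-n\ge 2\sqrt{nt}+2t]\le e^{-t}$, plug in $t=n/16$, and use $2\sqrt{nt}+2t=n/2+n/8\le n$ to conclude. There is nothing to add.
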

\begin{proof}
	Note that $\norm{\frac{w}{\sigma}}^2$ has the Chi-squared distribution. A concentration bound by Laurent and Massart \citep[Lemma 1]{laurent2000adaptive} implies that for all $t > 0$ we have
	\[
	 	\Pr\left[ \norm{\frac{w}{\sigma}}^2 - n \geq 2\sqrt{nt} + 2t \right] \leq e^{-t}~.
	\]
	Plugging-in $t=\frac{n}{16}$, we get
	\begin{align*}
        \Pr\left[ \norm{\frac{w}{\sigma}}^2  \geq  2n \right] 
		\leq \Pr\left[ \norm{\frac{w}{\sigma}}^2 - n \geq  n/2 + n/8 \right] 
		\leq e^{-n/16}~.
	\end{align*}
	Thus, we have
	\begin{equation*} 
		\Pr\left[ \norm{w}  \geq \sigma \sqrt{2n} \right] 
		\leq e^{-n/16}~.
	\end{equation*}
\end{proof}

\begin{lemma}
\label{lem:randominnermul}
    Let $u \in \reals^n$, and $v \sim \mathcal{N}(\zero,\sigma^2 I_n)$. Then, for every $t>0$ we have
        \item \[\Pr \left[| \inner{u,v}| \geq \norm{u} t  \right] \leq 2 \exp \left(-\frac{t^2}{2\sigma^2}\right)~.\]
\end{lemma}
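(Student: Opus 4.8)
\textbf{Proof proposal for Lemma~\ref{lem:randominnermul}.}
The plan is to reduce the statement to a one-dimensional Gaussian tail bound. First I would observe that $\inner{u,v} = \sum_{j=1}^n u_j v_j$ is a linear combination of the independent coordinates $v_j \sim \mathcal{N}(0,\sigma^2)$, hence itself a centered Gaussian random variable. Its variance is $\sum_{j=1}^n u_j^2 \sigma^2 = \sigma^2\norm{u}^2$, so $\inner{u,v} \sim \mathcal{N}(0,\sigma^2\norm{u}^2)$. If $\norm{u}=0$ the claim is trivial (the left-hand probability is $0$), so assume $\norm{u}>0$ and set $Z := \inner{u,v}/(\sigma\norm{u})$, which is a standard normal variable.

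Next I would rewrite the event of interest in terms of $Z$: $\{|\inner{u,v}|\geq \norm{u}t\}$ is exactly the event $\{|Z|\geq t/\sigma\}$. It then suffices to invoke the standard sub-Gaussian tail bound $\Pr[|Z|\geq s]\leq 2\exp(-s^2/2)$ for $Z\sim\mathcal{N}(0,1)$ and all $s>0$, applied with $s = t/\sigma$, which yields $\Pr[|\inner{u,v}|\geq\norm{u}t]\leq 2\exp(-t^2/(2\sigma^2))$, as claimed.

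For completeness I would include a one-line derivation of that tail bound via a Chernoff argument: for any $\lambda>0$, Markov's inequality gives $\Pr[Z\geq s]\leq e^{-\lambda s}\,\E[e^{\lambda Z}] = e^{-\lambda s + \lambda^2/2}$, and optimizing over $\lambda$ by taking $\lambda = s$ gives $\Pr[Z\geq s]\leq e^{-s^2/2}$; the two-sided bound follows by symmetry of $Z$ and a union bound over $\{Z\geq s\}$ and $\{Z\leq -s\}$.

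There is no real obstacle here — the only points requiring a word of care are the degenerate case $\norm{u}=0$ and making sure the variance computation uses the independence of the coordinates of $v$; everything else is the classical Gaussian concentration estimate.
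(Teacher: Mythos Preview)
Your proof is correct and follows essentially the same approach as the paper: both reduce to a one-dimensional Gaussian tail bound by identifying the distribution of $\inner{u,v}$. The only cosmetic difference is that the paper uses rotation invariance of $\mathcal{N}(\zero,\sigma^2 I_n)$ to align $u/\norm{u}$ with $e_1$ and read off $\inner{u/\norm{u},v}\sim\mathcal{N}(0,\sigma^2)$, whereas you compute the variance directly from the coordinate expansion; the concluding tail bound is identical.
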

\begin{proof}
We first consider $\inner{\frac{u}{\norm{u}},v}$. As the distribution $\mathcal{N}(\zero,\sigma^2 I_n)$ is rotation invariant, one can rotate $u$ and $v$ to get $\Tilde{u}$ and $\Tilde{v}$ such that $\Tilde{\frac{u}{\norm{u}}}= e_1 $, the first standard basis vector and $\inner{\frac{u}{\norm{u}},v} = \inner{\Tilde{\frac{u}{\norm{u}}},\Tilde{v}}$. Note, $v$ and $\Tilde{v}$ have the same distribution. We can see that $\inner{\Tilde{\frac{u}{\norm{u}}},\Tilde{v}} \sim \mathcal{N}(0,\sigma^2)$ since it is the first coordinate of $\Tilde{v}$. By a standard tail bound, we get that for $t > 0$:

\begin{align*}
    \Pr \left[|\inner{\frac{u}{\norm{u}},v}| \geq t  \right]   
    = \Pr \left[| \inner{\Tilde{\frac{u}{\norm{u}}},\Tilde{v}}| \geq t  \right] 
    = \Pr \left[|\Tilde{v}_1| \geq t  \right]  \leq 2 \exp\left(-\frac{t^2}{2\sigma^2}\right)~.
\end{align*}
Therefore
\[\Pr \left[| \inner{u,v}| \geq \norm{u} t  \right]  \leq 2 \exp\left(-\frac{t^2}{2\sigma^2}\right)~.\]
\end{proof}

\begin{lemma}
\label{2randominnermul}
    Let $u \sim \mathcal{N}(\zero,\sigma_1^2 I_n)$, and $v \sim \mathcal{N}(\zero,\sigma_2^2 I_n)$. 
    Then, for every $t>0$ we have
        \[ \Pr \left[|\inner{u,v}| \geq \sigma_1 \sqrt{2n} t  \right] \leq e^{-n/16} + 2e^{-t^2/2\sigma_2^2}~.\]
\end{lemma}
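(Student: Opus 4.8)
The plan is to condition on the draw of $u$ and combine the two one-sided bounds already proved, namely Lemma~\ref{lem:randomnormsmall} (applied to $u$) and Lemma~\ref{lem:randominnermul} (applied to a fixed vector and the Gaussian $v$). First I would invoke Lemma~\ref{lem:randomnormsmall} with dimension $n$ and variance $\sigma_1^2$ to get that the event $E := \{\norm{u}^2 \le 2\sigma_1^2 n\}$ satisfies $\Pr[\complement{E}] \le e^{-n/16}$.

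Next I would split by the law of total probability:
\[
\Pr\left[|\inner{u,v}| \ge \sigma_1\sqrt{2n}\,t\right] \;\le\; \Pr[\complement{E}] \;+\; \Pr\left[|\inner{u,v}| \ge \sigma_1\sqrt{2n}\,t \ \text{ and }\ E\right].
\]
The first summand is at most $e^{-n/16}$ by the previous paragraph. For the second, I would condition on $u$: since $u$ and $v$ are independent, for any fixed value of $u$ with $u \in E$ we have $\norm{u} \le \sigma_1\sqrt{2n}$, and therefore the inequality $|\inner{u,v}| \ge \sigma_1\sqrt{2n}\,t$ implies $|\inner{u,v}| \ge \norm{u}\,t$. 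Applying Lemma~\ref{lem:randominnermul} to this fixed $u$ and to $v \sim \mathcal{N}(\zero,\sigma_2^2 I_n)$ gives $\Pr\!\left[|\inner{u,v}| \ge \norm{u}\,t \,\middle|\, u\right] \le 2\exp\!\left(-t^2/2\sigma_2^2\right)$ for every such $u$. Integrating this conditional bound over $E$ yields $\Pr\left[|\inner{u,v}| \ge \sigma_1\sqrt{2n}\,t \text{ and } E\right] \le 2\exp\!\left(-t^2/2\sigma_2^2\right)$, and adding the two summands gives the stated bound.

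I do not expect a genuine obstacle here. The only points needing care are keeping the direction of the implication $|\inner{u,v}| \ge \sigma_1\sqrt{2n}\,t \Rightarrow |\inner{u,v}| \ge \norm{u}\,t$ correct on the event $E$ (it uses $\norm{u} \le \sigma_1\sqrt{2n}$), and ensuring the conditioning on $u$ is legitimate, which it is because $u$ and $v$ are independent so Lemma~\ref{lem:randominnermul} applies verbatim to $v$ with $u$ frozen.
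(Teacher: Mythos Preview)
Your proposal is correct and follows essentially the same route as the paper: split on whether $\norm{u}$ exceeds $\sigma_1\sqrt{2n}$, bound that bad event by Lemma~\ref{lem:randomnormsmall}, and on the complementary event use Lemma~\ref{lem:randominnermul} conditionally on $u$. The paper's write-up is slightly more terse (it applies the union bound directly to the two events $\{\norm{u}\ge\sigma_1\sqrt{2n}\}$ and $\{|\inner{u,v}|\ge\norm{u}t\}$ without spelling out the conditioning), while you make the conditioning-and-integration step explicit; both arguments are the same in substance.
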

\begin{proof}
    
Using Lemma~\ref{lem:randomnormsmall} we get that w.p. $\leq e^{-n/16}$ we have $\norm{u} \geq \sigma_1 \sqrt{2n}$. 
Moreover, by Lemma~\ref{lem:randominnermul}, w.p. $\leq 2 \exp \left(-\frac{t^2}{2\sigma_2^2}\right)$ we have $| \inner{u,v}| \geq \norm{u} t$. 
By the union bound, we get
\begin{align*}
    \Pr \left[|\inner{u,v}| \geq \sigma_1 \sqrt{2n} t  \right] 
    \leq \Pr\left[ \norm{u} \geq \sigma_1 \sqrt{2n} \right] + \Pr \left[ | \inner{u,v}| \geq \norm{u} t \right]
    \leq e^{-n/16} + 2 \exp \left(-\frac{t^2}{2\sigma_2^2} \right)~.
\end{align*}
\end{proof}

We are now ready to prove the main theorem of this section:

\begin{proof}[\thmref{thm: exists pert}]

By \thmref{thm:rotation invariant}(2), we can assume w.l.o.g. that $P=M=\text{span}\{e_1,\dots,e_{d-\ell}\}$.
We also assume w.l.o.g. that $y_0=1$, the case $y_0=-1$ is proved in a similar manner. Denote by $\bar{w}:=(w)_{d-\ell+1:d}$, the last $\ell$ coordinates of $w$. By \thmref{thm:weights dont change} we have that $\bar{w}_i$ have not  changed after training from their initial value.

We can write $N(x_0 + z)$ as:



\begin{align}
    N(x_0+z) & = \sum_{i=1}^m u_i \sigma(\inner{w_i,x_0} + \inner{w_i,z}) \nonumber\\
    & = \sum_{i  \in I_-} u_i \sigma(\inner{w_i,x_0} + \inner{w_i,z}) + \sum_{i \in I_+} u_i \sigma(\inner{w_i,x_0} + \inner{w_i,z}) \nonumber\\
    & = \sum_{i  \in I_-} u_i \sigma(\inner{w_i,x_0} + \inner{\bar{w}_i,\bar{z}}) + \sum_{i \in I_+} u_i \sigma(\inner{w_i,x_0} + \inner{\bar{w}_i,\bar{z}}) \label{eq:N of x_0 + z} 
\end{align}
where the last equality is since $(z)_{1:d-\ell} = \bm{0}$, hence $\inner{w,z} = \inner{\bar{w},\bar{z}}$ for every $w\in\reals^d$. We will bound each term of the above separately.

For the first term in \eqref{eq:N of x_0 + z}, where $i\in I_-$ we can write:
\begin{align*}
    \inner{\bar{w}_i,\bar{z}} = \alpha\norm{\bar{w}_i}^2  + \alpha\inner{\bar{w}_i, \sum_{j\neq i}\text{sign}(u_j)\bar{w}_j}~.
\end{align*}
By our assumptions, $\bar{w}_i\sim\Ncal\left(\zero, \frac{1}{d}I_\ell\right)$ and $\sum_{j\neq i}\text{sign}(u_j)\bar{w}_j\sim \Ncal\left(\zero, \frac{m-1}{d}I_\ell\right)$, since it is a sum of $m-1$ i.i.d. Gaussian random variables, which are also symmetric hence multiplying them by $-1$ does not change their distribution. From \lemref{randomnormlarge} we get w.p. $\geq 1- e^{-\ell/16}$ that 
\[
\alpha\cdot  \norm{\bar{w}_i}^2 \geq \alpha\cdot \frac{\ell}{2d}~.
\]
From \lemref{2randominnermul}, and using $t = \sqrt{\frac{(m-1)\log(dm^2)}{d}}$ we get w.p. $ \geq 1 -e^{-\ell/16} + 2e^{-t^2d/2(m-1)} = 1 -e^{-\ell/16} + 2m^{-1}d^{-1/2}$ that
\begin{align}\label{eq:inner product sum w_i bound}
\inner{\bar{w}_i, \sum_{j\neq i}\text{sign}(u_j)\bar{w}_j} &\leq \frac{1}{\sqrt{d}}t\sqrt{2\ell} \nonumber \\
& = \frac{1}{d}\cdot \sqrt{2\ell(m-1)\log(m^2d)}~.
\end{align}
Applying union bound over the above two events, and for every $i\in I_-$, we get w.p. $\geq 1 - 2\left( me^{-\ell/16} + d^{-1/2}\right)$ that:
\[
\inner{\bar{w}_i,\bar{z}} \geq \frac{\alpha \ell}{2d} - \frac{\alpha}{d}\sqrt{2\ell(m-1)\log(m^2d})~.
\]

For the second term in \eqref{eq:N of x_0 + z}, where $i\in I_+$ we can write in a similar way:
\[
    \inner{\bar{w}_i,\bar{z}} =  - \alpha\norm{\bar{w}_i}^2  + \alpha\inner{\bar{w}_i, \sum_{j\neq i}\text{sign}(u_j)\bar{w}_j}~.
\]
Using the same argument as above, we get w.p $\geq 1 - 2\left( me^{-\ell/16} + d^{-1/2}\right)$ that:
\[
\inner{\bar{w}_i,\bar{z}} \leq -\frac{\alpha \ell}{2d} + \frac{\alpha}{d}\sqrt{2\ell(m-1)\log(m^2d)}~.
\]
By assuming that $\ell \geq 8(m-1)\log(m^2d)$ we get that $\inner{\bar{w}_i,z} \leq 0$. Denote $C:= \frac{\alpha \ell}{2d} - \frac{\alpha}{d}\sqrt{2\ell(m-1)\log(m^2d)}$, then going back to \eqref{eq:N of x_0 + z}, using the above bounds and applying union bound, we get w.p. $\geq 1 - 4\left( me^{-\ell/16} + d^{-1/2}\right)$ that:
\begin{align*}
    N(x_0 + z) & \leq \sum_{i\in I_-}u_i \sigma(\inner{w_i,x_0} + C) + \sum_{i\in I_+}u_i \sigma(\inner{w_i,x_0})  \\
    & = \sum_{i\in I_-}u_i \sigma(\inner{w_i,x_0} + C) + \sum_{i\in I_+}u_i \sigma(\inner{w_i,x_0}) + \sum_{i\in I_-}u_i\sigma(\inner{w_i,x_0}) - \sum_{i\in I_-}u_i\sigma(\inner{w_i,x_0}) \\
    & =\sum_{i\in I_-}u_i \sigma(\inner{w_i,x_0} + C) - \sum_{i\in I_-}u_i\sigma(\inner{w_i,x_0}) + N(x_0) \\
    & = \sum_{i\in I_-}u_i\left(\sigma(\inner{w_i,x_0} + C) - \sigma(\inner{w_i,x_0})\right) + N(x_0)~.
\end{align*}
Define $F_-:=\{i\in I_-: \inner{w_i,x_0} \geq 0\}$, and $k_- = |F_-|$. We have that:

\begin{align*}
\sum_{i\in I_-}u_i\left(\sigma(\inner{w_i,x_0} + C) - \sigma(\inner{w_i,x_0})\right) & \leq \sum_{i\in F_-}u_i\left(\sigma(\inner{w_i,x_0} + C) - \sigma(\inner{w_i,x_0})\right) \\
& = \sum_{i\in F_-}u_i C = -\frac{k_- C}{\sqrt{m}}~,
\end{align*}
where the first inequality is since we only sum over negative terms, and the second inequality is since both $\inner{w_i,x_0} \geq 0$ (because $i\in F_-$) and $C\geq 0$ (because $\ell \geq 32(m-1)\log(m^2d)$). Combining all of the above, we get that:

\begin{equation}\label{eq: N of x_0 + z w.r.t N of x_0}
N(x_0 + z) \leq -\frac{k_- C}{\sqrt{m}} + N(x_0)~.    
\end{equation}

By our assumption that $\ell \geq 32(m-1)\log(m^2 d)$ we have that 
\begin{align*}
    C &= \alpha \left( \frac{1}{2} \frac{\ell}{d} - \sqrt{2} \sqrt{m-1} \frac{\sqrt{\ell}}{d} \sqrt{\log(dm^2 )} \right) \\
    & = \frac{\alpha\sqrt{\ell}}{d}\left(\frac{\sqrt{\ell}}{2} - \sqrt{2(m-1)\log(m^2 d)}\right) \\
    & \geq  \frac{\alpha\ell}{4d}~.       
\end{align*}
Plugging in $C$ and $\alpha = \frac{8\sqrt{m}dN(x_0)}{k_- \ell}$ to \eqref{eq: N of x_0 + z w.r.t N of x_0} we get that:
\begin{align*}
N(x_0 + z) &\leq -\frac{k_- C}{\sqrt{m}} + N(x_0) \\
& \leq -\frac{k_-}{\sqrt{m}}\cdot \frac{\ell}{4d}\cdot \frac{8\sqrt{m}dN(x_0)}{k_- \ell} + N(x_0) = -N(x_0) < 0~,    
\end{align*}
and in particular $\text{sign}(N(x_0)) \neq \text{sign}(N(x_0 + z))$.

We are left with calculating the norm of $z$:
\begin{align*}
\norm{z} &= \alpha\cdot\left\| \sum_{i\in I_-} \Pi_{M^\perp}(w_i) - \sum_{i\in I_+} \Pi_{M^\perp}(w_i) \right\| \\
&=\alpha\cdot \left\|\sum_{i=1}^m -\text{sign}(u_i)\Pi_{M^\perp}(w_i)\right\| \\
&= \alpha\cdot \left\|\sum_{i=1}^m -\text{sign}(u_i)\bar{w}_i\right\|~.
&
\end{align*}

Since for each $i\in[m]$, $\bar{w}_i\sim\Ncal\left(\zero,\frac{1}{d}I_\ell\right)$, then $-\text{sign}(u_i)\bar{w}_i$ also have the same distribution, because this is a symmetric distribution. Hence, $\sum_{i=1}^m -\text{sign}(u_i)\bar{w}_i\sim \Ncal\left(\zero,\frac{m}{d}I_\ell\right)$ as a sum of Gaussian random variables. Using \lemref{lem:randomnormsmall} we get w.p $\geq 1-e^{-\ell/16}$ that $\left\|\sum_{i=1}^m -\text{sign}(u_i)\bar{w}_i\right\|^2 \leq \frac{2 m\ell}{d}$. Plugging in $\alpha$ we get that:
\begin{align*}
    \norm{z} \leq \sqrt{\frac{2m\ell}{d}}\cdot \frac{8\sqrt{m}dN(x_0)}{k_- \ell} =
    8\sqrt{2}N(x_0)\cdot \frac{m}{k_-}\cdot \sqrt{\frac{d}{\ell}}~.
\end{align*}

\end{proof}

\section{Proofs for \secref{sec: improve robustness}}\label{apen: proofs for improve robustness}

For proving the main theorem, we will use 
the following lemma that upper bounds the norm of a sum of Gaussian random variables:

\begin{lemma}
\label{lem:sumofrandomlydirectedupper}
    Let $w_1,..,w_m \in\reals^n$ such that for all $i \in [m]$, $w_i \sim \mathcal{N}(\zero,\sigma^2 I_n)$, then we have: 
     \[
         \mathbb{P} \left[\norm{\sum\limits_{i=1}^m w_i}^2 \geq 2 m \sigma^2 n \right] \leq e^{-\frac{n}{16}} 
     \]
\end{lemma}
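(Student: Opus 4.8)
The plan is to reduce Lemma~\ref{lem:sumofrandomlydirectedupper} to Lemma~\ref{lem:randomnormsmall}, exactly mirroring the way Lemma~\ref{randomlydirected} was reduced to Lemma~\ref{randomnormlarge}. First I would observe that the sum $s := \sum_{i=1}^m w_i$ is itself a centered Gaussian vector in $\reals^n$. Concretely, write $w_{i,j}$ for the $j$-th coordinate of $w_i$, so that $w_{i,j}\sim\mathcal{N}(0,\sigma^2)$ independently across $i$ and $j$. Then $s_j = \sum_{i=1}^m w_{i,j}$ is a sum of $m$ independent $\mathcal{N}(0,\sigma^2)$ variables, hence $s_j\sim\mathcal{N}(0,m\sigma^2)$, and the coordinates $s_1,\dots,s_n$ remain independent. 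Therefore $s\sim\mathcal{N}(\zero,\,m\sigma^2 I_n)$.

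Next I would simply invoke Lemma~\ref{lem:randomnormsmall} with the vector $s$, dimension $n$, and variance parameter $m\sigma^2$ in place of $\sigma^2$. That lemma gives $\Pr[\norm{s}^2 \geq 2(m\sigma^2)n]\leq e^{-n/16}$, which is exactly the claimed bound $\Pr[\norm{\sum_{i=1}^m w_i}^2 \geq 2m\sigma^2 n]\leq e^{-n/16}$.

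There is essentially no obstacle here: the only things to be careful about are that the $w_i$ are assumed independent (so that the coordinatewise sums are genuinely Gaussian with the stated variance) and that Lemma~\ref{lem:randomnormsmall} is stated for a general variance $\sigma^2$, so plugging in $m\sigma^2$ is legitimate. I would present it as a two-line proof, identical in structure to the proof of Lemma~\ref{randomlydirected}.

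\begin{proof}
We denote the $j$-th coordinate of the vector $w_i \in\reals^n$ by $w_{i,j}$. For any $i \in \{1,\ldots,m\}$ and $j \in \{1,\ldots,n\}$ we have $w_{i,j} \sim \mathcal{N}(0,\sigma^2)$, independently. Let $s :=\sum_{i=1}^m w_{i}$, and let $s_j$ denote its $j$-th coordinate. Then $s_j = \sum_{i=1}^m w_{i,j}$ is a sum of $m$ independent Gaussian variables, hence $s_j\sim\mathcal{N}(0,m\sigma^2)$, and the coordinates of $s$ are independent, so $s \sim \mathcal{N}(\zero,m \sigma^2 I_n)$. Applying Lemma~\ref{lem:randomnormsmall} with variance $m\sigma^2$ yields
\[
\mathbb{P} \left[\norm{\sum_{i=1}^m w_i}^2 \geq 2 m \sigma^2 n \right] \leq e^{-\frac{n}{16}}~.
\]
\end{proof}
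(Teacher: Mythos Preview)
Your proof is correct and essentially identical to the paper's own proof: both observe that $s=\sum_{i=1}^m w_i\sim\mathcal{N}(\zero,m\sigma^2 I_n)$ and then apply Lemma~\ref{lem:randomnormsmall} with variance $m\sigma^2$.
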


\begin{proof}
    We denote the $j$-th coordinate of the vector $w_i \in\reals^n$ by $w_{i,j}$. Note, for any $i \in [m]$ and $j \in [n]$ we have $w_{i,j} \sim \mathcal{N}(0,\sigma^2)$. We denote by $s$ the sum vector $s :=\sum\limits_{i=1}^m w_{i}$, and by $s_j$ the $j$-th coordinate of $s$. By this definition, $s_j  = \sum\limits_{i=1}^m w_{i,j}$ is a sum of $m$ independent Gaussian variables and therefore also a Gaussian variable. Therefore, $s \sim \mathcal{N}(\zero,m \sigma^2 I_n)$.  We use Lemma \ref{lem:randomnormsmall} with variance $m \sigma^2$ and get that:
    \[\mathbb{P} \left[\norm{\sum\limits_{i=1}^m w_i}^2 \geq 2 m \sigma^2 n \right] \leq e^{-\frac{n}{16}}~.\]

\end{proof}

We now prove the main theorem of this section:

\begin{proof}[Proof of \thmref{thm:upper bound small init}]
Similar to the lower bound of the norm, let $M = \text{span}\{e_1,\dots,e_{d-\ell}\}$. By \thmref{thm:rotation invariant}(1), given a training dataset $X\subseteq P$, it is enough to consider a training set $X_R = \{Rx: x\in X\}$, where $R$ is an orthogonal matrix such that $R\cdot P = M$, and training is done over $X_R$. From now on, we assume that the training data, as well as $x_0$ lie on $M$, and the consequences of this proof would also imply for a dataset $X$ and $x_0\in P$.

The projection of the gradient on $M^\perp$ is equal to:

 \[
    \Pi_{M^\perp}\left(\frac{\partial N(x_0)}{\partial x}\right) = \Pi_{M^\perp}\left(\sum\limits_{i=1}^{m} u_i w_i \mathbbm{1}_{\inner{w_i,x_0} \geq 0}\right) = \sum\limits_{i=1}^{m} \Pi_{M^\perp}\left(u_i w_i\right)  \mathbbm{1}_{i \in S} = \sum\limits_{i \in S} \Pi_{M^\perp}\left(u_i w_i\right)~.
\]



Denote by $\hat{w}_i = (w_i)_{d-\ell+1:d}$, the last $\ell$ coordinates of $w_i$. By \thmref{thm:weights dont change} we get that for every $i\in[m]$, $\hat{w}_i$ did not change from their initial value during training.

Recall that we initialized $\hat{w}_i \sim \mathcal{N}(\zero,\beta^2 I_{\ell})$.
Note that the set $S$ is independent of the value of the $\hat{w}_i$'s. This is because $\hat{w}_i$ does not effect the training, hence will not effect $w_i - \Pi_{M^\perp}(w_i)
$. Also, after choosing $x_0$ we have $\inner{\hat{w}_i,\hat{x}_0} = 0$, since $\hat{x}_0 = \bm{0}$, which means that the choice of $S$ is independent of the $\hat{w}_i$'s. We can conclude that the random variables $\hat{w}_i$ for $i\in S$ are sampled independently.

Therefore, from Lemma~\ref{randomlydirected} we get that w.p. $\geq 1- e^{-\ell/16}$: 

\[ 
    \norm{  \sum\limits_{i \in S} \hat{w}_i} \leq \beta \sqrt{2 k \ell}~.
\]
Note, since for all $i \in [m]$, $|u_i| = \frac{1}{\sqrt{m}}$ and they are not trained, we get w.p. $\geq 1-e^{-\ell/16}$ that:
\[
    \left\|\Pi_{M^\perp}\left(\frac{\partial N(x_0)}{\partial x}\right) \right\| \leq \beta \sqrt{\frac{2k \ell}{m}}~.
\]
\end{proof}

\subsection{Explicit $L_2$ regularization}\label{apen: explicit regularization}
\begin{proof}[Proof of \thmref{thm: explicit regularization gradient}]
 As before, for this proof we rotate the data subspace $P$ to lie on $M = \text{span}\{e_1,\dots,e_{d-\ell}\}$ and rotate the model's weights accordingly. For a dataset $(x_1,y_1),..,(x_r,y_r)$, we train over the following objective:

 \[ \sum_{j=1}^r L(y_j \cdot N(x_j,\bw_{1:m}))) + \frac{1}{2}\lambda \norm{\bw_{1:m}}^2\]
 

In \thmref{thm:weights dont change}, we showed for all $(x_j, y_j)$ that if we train the model using the loss $L$ we get:

\[\Pi_{M^\perp}\left(\frac{\partial L\left( N(x_j,\bw_{1:m})\cdot y_j\right)}{\partial w_i}\right) = 0 \]

Now, we analyze the training process using the new loss which includes the regularization term. We denote by $w^{(t)}_i$ the weight vector $w_i$ after $t$ training steps, and by $\hat{w}^{(t)}_i := \Pi_{M^\perp}\left(w^{(t)}_i\right) $ its projection on the subspace orthogonal to $M$.
We look at the projected gradient of $w^{(t)}_i$ w.r.t. the loss:

\begin{align*}
    \Pi_{M^\perp}&\left(\frac{\partial \sum_{j=1}^r L\left( N(x_j,\bw^{(t)}_{1:m})\cdot y_j\right)}{\partial w_i} + \frac{\partial \frac{1}{2}\lambda \norm{w^{(t)}_i}^2}{\partial w_i}\right)=\\
    =&\sum_{j=1}^r\Pi_{M^\perp}\left(\frac{\partial L\left( N(x_j,\bw^{(t)}_{1:m})\cdot y_j\right)}{\partial w_i}\right) + \Pi_{M^\perp}\left(\frac{\partial \frac{1}{2}\lambda \norm{w^{(t)}_i}^2}{\partial w_i}\right)\\
    =& \Pi_{M^\perp}\left(\frac{\partial \frac{1}{2}\lambda \norm{w^{(t)}_i}^2}{\partial w_i}\right)\\
    =& \Pi_{M^\perp}\left(\lambda w^{(t)}_i\right) \\
    =& \lambda \hat{w}^{(t)}_i.
\end{align*}
For a training step of size $\eta$, using gradient descent we get that:

\[ \hat{w}^{(t+1)}_i = \hat{w}^{(t)}_i - \eta \lambda \hat{w}^{(t)}_i.  \]
Thus, after a total of $T$ iteration of training we get that:

\[ \hat{w}^{(T)}_i = (1- \eta \lambda)^T \hat{w}^{(0)}_i~.  \]

Therefore, the projection of gradients after training onto $P^\perp$ will be the same as if they were initialized to $\sim \mathcal{N}\left(0, \frac{(1- \eta \lambda)^{2T}}{d}I_d\right)$ and trained using logistic loss without regularization. The rest of the proof is the same as \thmref{thm:upper bound small init} for $\beta = \frac{(1- \eta \lambda)^T}{\sqrt{d}}$.
\end{proof}
\section{Further Experiments and Experimental Details}\label{appen: experiments}

\subsection{Further Experiments}

In Figure \ref{fig:2d data def init}  we present the boundary of a two-layer ReLU network trained over a $25$-point dataset on a two-dimensional linear subspace, similar to \figref{fig:2d data}. We train the networks until reaching a constant positive margin. The difference between the figures is that in \figref{fig:2d data def init} we initialize the weights using the default PyTorch initialization, while in \figref{fig:2d data} we initialized using a smaller scale for the robustness effect to be smaller, and visualized more easily.  
The experiment in \figref{fig:2d data def init} is demonstrating an extreme robustness effect, occurring when using the standard settings.

\begin{figure}[!ht]
\centering
\begin{subfigure}[b]{0.3\textwidth}
    \centering
     \includegraphics[width=\textwidth]{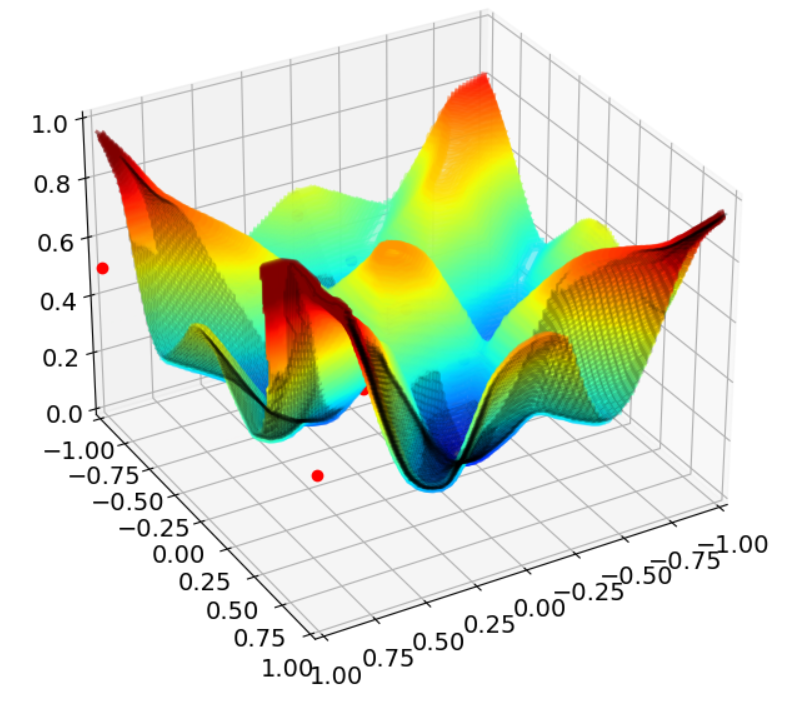}
     \caption{}
     \label{fig: 2d clean def init}
\end{subfigure}
\begin{subfigure}[b]{0.3\textwidth}
    \centering
     \includegraphics[width=\textwidth]{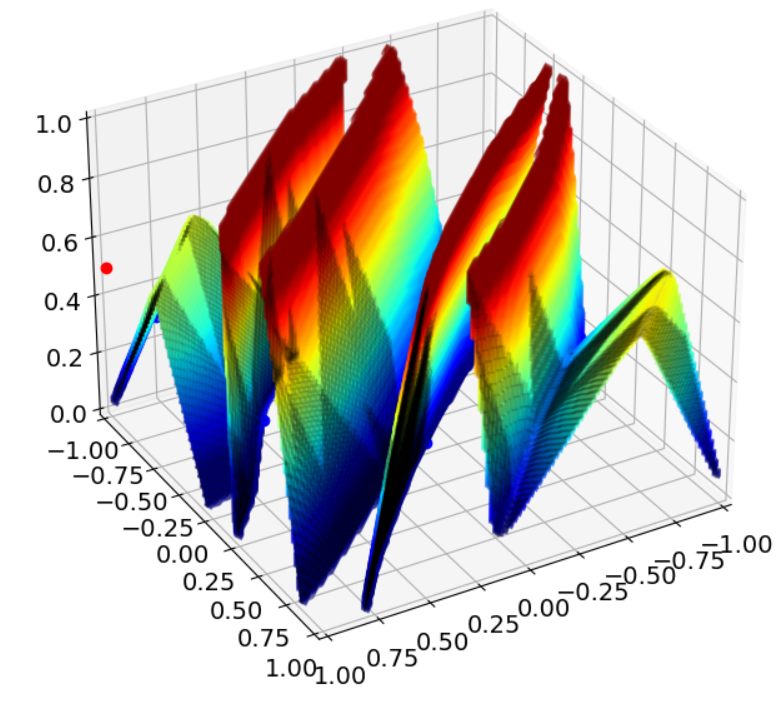}
     \caption{}
     \label{fig: 2d small init def init}
\end{subfigure}
\begin{subfigure}[b]{0.3\textwidth}
    \centering
     \includegraphics[width=\textwidth]{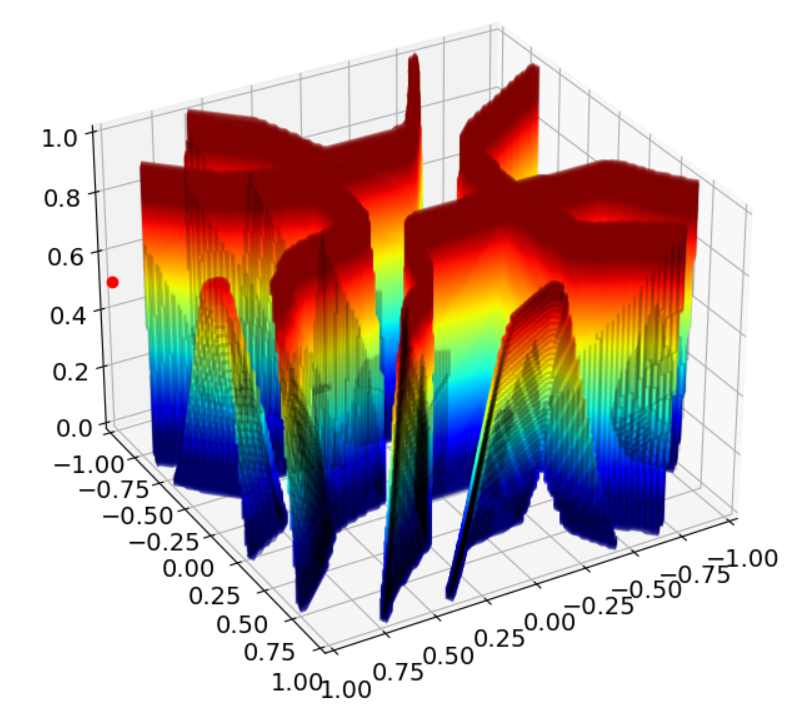}
     \caption{}
     \label{fig: 2d regular def init}
\end{subfigure}
\caption{\textbf{Experiments on two-dimensional dataset.} We plot the dataset points and the decision boundary in 3 settings: (a) Vanilla trained network, (b) The network's weights are initialized from a smaller variance distribution, and (c) Training with regularization. Colors are used to emphasise the values in the $z$ axis.} \label{fig:2d data def init}
\end{figure}

In \figref{fig:1d data deep} we go beyond the theory discussed in this paper, and present similar phenomena in all three settings for a five-layer ReLU network. In \figref{fig: 1d deep clean} we can see the boundary of the regularly trained network within a small distance in $P^\perp$ from the data points. In \figref{fig: 1d deep small init} we use small initialization for all five layers, and present a boundary almost orthogonal to the data manifold. In \figref{fig: 1d deep regular}, the boundary of a regularized trained network is in a similar form. 
This experiment suggests that our theoretical results might be extended also to deeper networks, where all layers are trained.

\begin{figure}[!ht]
\centering
\begin{subfigure}[b]{0.3\textwidth}
     \centering
     \includegraphics[width=\textwidth]{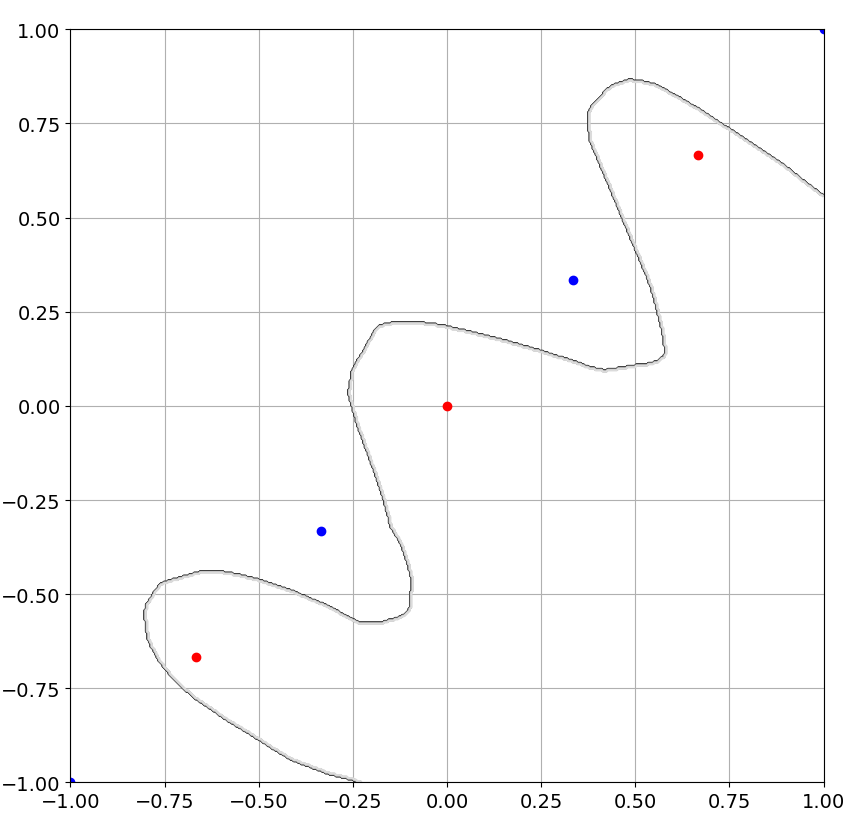}
     \caption{}
     \label{fig: 1d deep clean}
\end{subfigure}
\begin{subfigure}[b]{0.3\textwidth}
     \centering
     \includegraphics[width=\textwidth]{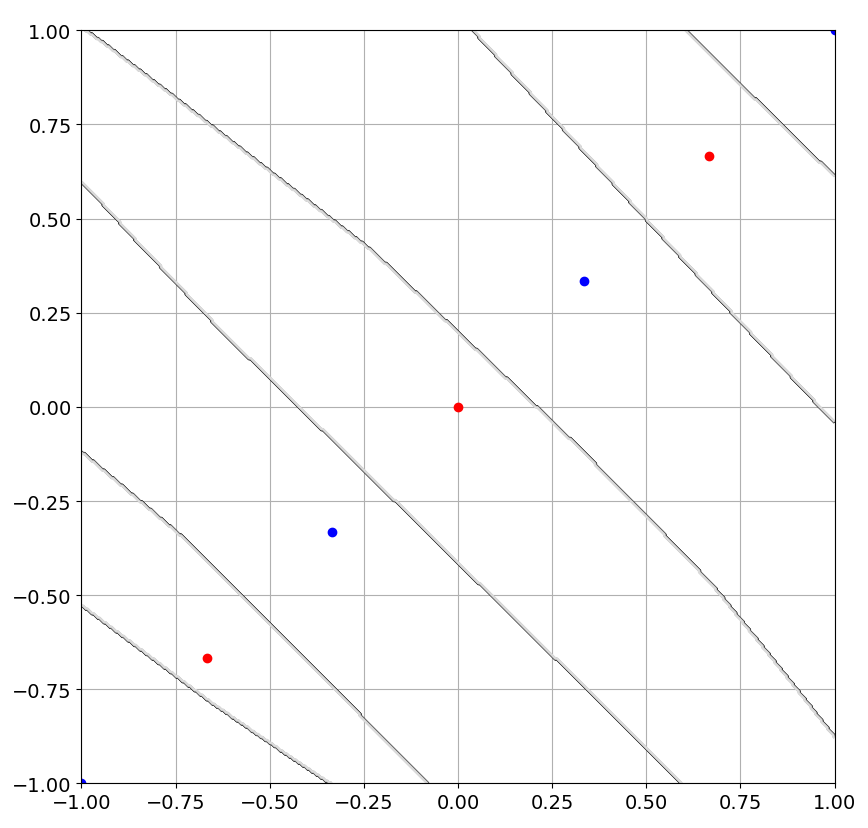}
     \caption{}
     \label{fig: 1d deep small init}
\end{subfigure}
\begin{subfigure}[b]{0.3\textwidth}
     \centering
     \includegraphics[width=\textwidth]{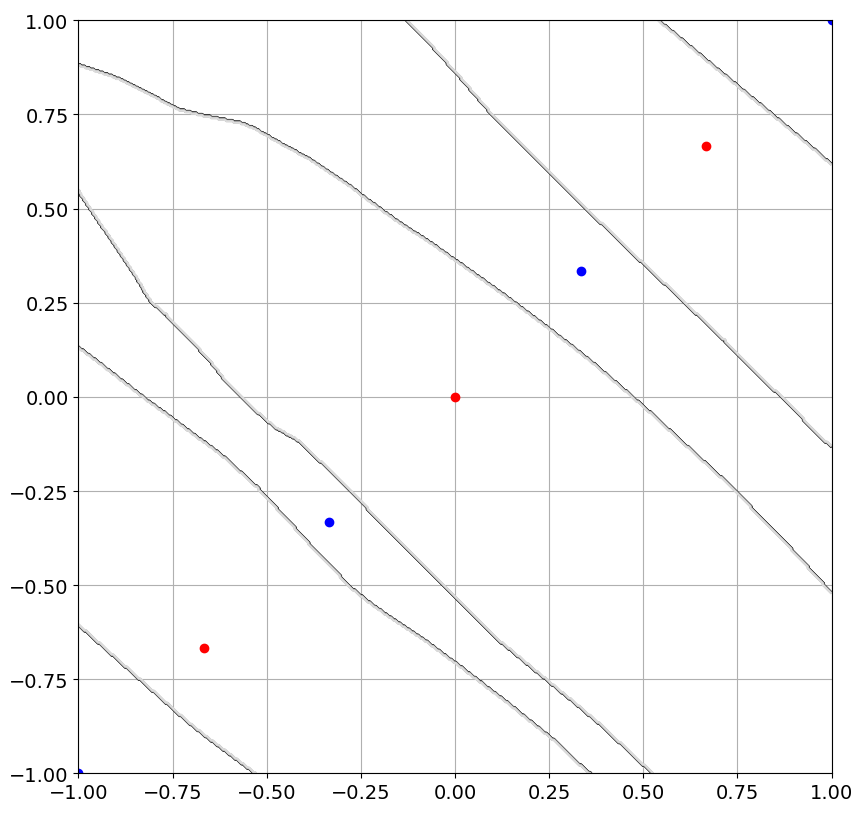}
     \caption{}
     \label{fig: 1d deep regular}
\end{subfigure}

\caption{\textbf{Experiments on one-dimensional dataset with deep network.} We plot the dataset points and the decision boundary in 3 settings: (a) Vanilla trained network, (b) The network's weights are initialized from a smaller variance distribution, and (c) Training with regularization.} \label{fig:1d data deep}
\end{figure}

\subsection{One-dimensional dataset experiment - 2 layer network (\figref{fig:1d data})}
\paragraph{Dataset} For all the three experiments we used a 7-point data set, spread equally on the two dimensional line $y=x$ from $(-1,-1)$ to $(1,1)$.
\paragraph{Network} For all the three experiments we used two-layer ReLU network of width $100$ with biases in both layers. The weights of both layers were initialized using (1+3) default PyTorch initialization for linear layers, (2) default initialization divided by $3$.
\paragraph{Training} We used train step of size $0.02$ for (1+3) and $0.04$ for (2). We trained both layers until the margin reached $0.3$. The losses we used were (1+2) Logistic loss, (3) Logistic loss with $0.005$ $L_2$ regularization.

\subsection{Two-dimensional dataset experiment - smaller effect (\figref{fig:2d data})}
\paragraph{Dataset} For all the three experiments we used a 25-point data set, spread equally on a grid which lies on the $z=0.5$ axis.
\paragraph{Network} For all the three experiments we used two-layer ReLU network of width $4000$ with biases in both layers. The weights in the first layer were initialized in (1+3) from $\mathcal{N}(\zero,1/3 I_3)$, and in (2) from $\mathcal{N}(\zero,1/36 I_3)$. The weight of the output layer were initialized to the uniform distribution over the set $\{-1,1\}$.
\paragraph{Training} For all the experiments we trained both layers until the margin reached $0.3$ and we used train step of size $0.002$. The losses we used were (1+2) Logistic loss, (3) Logistic loss with $0.8$ $L_2$ regularization on the weights of the first layer.

\subsection{Two-dimensional dataset experiment (\figref{fig:2d data def init})}
\paragraph{Dataset} For all the three experiments we used a 25-point data set, spread equally on a grid which lies on the $x - y$ axis.
\paragraph{Network} For all the three experiments we used two-layer RelU network of width $400$ with biases in both layers. The weights in any layer were initialized using (1+3) default PyTorch initialization for linear layers, (2) default initialization divided by $3$.
\paragraph{Training} For (1) experiments we used train step of size $0.005$, and for (2+3) we used step of size $0.05$. We trained both layers until the margin reached $0.1$. The losses we used were (1+2) Logistic loss, (3) Logistic loss with $0.005$ $L_2$ regularization.

\subsection{One-dimensional dataset experiment - 5 layer network (\figref{fig:1d data deep})}
\paragraph{Dataset} For all the three experiments we used a 7-point data set, spread equally on the two dimensional line $y=x$ from $(-1,-1)$ to $(1,1)$.
\paragraph{Network} For all the three experiments we used 5-layer RelU network of width $100$ with biases in all layers. The weights in any layer were initialized using (1+3) default PyTorch initialization for linear layers, (2) default initialization divided by $3$.
\paragraph{Training} For (1+3) experiments we used train step of size $0.02$, and for (2) we used step of size $0.06$. we trained all layers until the margin reached $0.3$. The losses we used were (1+2) Logistic loss, (3) Logistic loss with $0.01$ $L_2$ regularization.




\end{document}